\title{Flow Equivariant Recurrent Neural Networks}
\author{%
  T. Anderson Keller\\
  The Kempner Instutite for the Study of Natural and Artificial Intelligence\\
  Harvard University, Cambridge, MA 15213 \\
\texttt{t.anderson.keller@gmail.com} \\
}
\def\eqref#1{equation~\ref{#1}}
\def\1{\bm{1}}
\DeclareMathAlphabet{\mathsfit}{\encodingdefault}{\sfdefault}{m}{sl}
\SetMathAlphabet{\mathsfit}{bold}{\encodingdefault}{\sfdefault}{bx}{n}
\def\gF{{\mathcal{F}}}
\def\gL{{\mathcal{L}}}
\def\gU{{\mathcal{U}}}
\def\gW{{\mathcal{W}}}
\def\sR{{\mathbb{R}}}
\def\sZ{{\mathbb{Z}}}
\newcommand{\R}{\mathbb{R}}
\theoremstyle{definition}
\newtheorem{definition}{Definition}[section]
\theoremstyle{remark}
\newtheorem*{remark}{Remark}
\newtheorem{theorem}{Theorem}[section]
\newtheorem{lemma}[theorem]{Lemma}
\newenvironment{proofsketch}{%
  \proof}{\endproof}
\declaretheorem[name=Theorem,numbered=no]{reptheorem}
\begin{document}

\maketitle


\begin{abstract}

Data arrives at our senses as a continuous stream, smoothly transforming from one instant to the next. These smooth transformations can be viewed as continuous symmetries of the environment that we inhabit, defining equivalence relations between stimuli over time. In machine learning, neural network architectures that respect symmetries of their data are called equivariant and have provable benefits in terms of generalization ability and sample efficiency. To date, however, equivariance has been considered only for static transformations and feed-forward networks, limiting its applicability to sequence models, such as recurrent neural networks (RNNs), and corresponding time-parameterized sequence transformations. In this work, we extend equivariant network theory to this regime of `flows' -- one-parameter Lie subgroups capturing natural transformations over time, such as visual motion. We begin by showing that standard RNNs are generally not flow equivariant: their hidden states fail to transform in a geometrically structured manner for moving stimuli. We then show how flow equivariance can be introduced, and demonstrate that these models significantly outperform their non-equivariant counterparts in terms of training speed, length generalization, and velocity generalization, on both next step prediction and sequence classification. We present this work as a first step towards building sequence models that respect the time-parameterized symmetries which govern the world around us. 
\end{abstract}    
\section{Introduction}

Every moment, the world around us shifts continuously. As you walk down the street, rooftops glide past, trees sway in the breeze, and cars drift in and out of view. These smooth, structured transformations -- or flows -- are not arbitrary, but instead are underpinned by a precise set of rules: an algebra which ensures self-consistency across space and time.
Mathematically, these transformations which leave the essence of the world around you unchanged are called symmetries, and, in a sense, these symmetries over time can be seen to weave together the instantaneous into the continuous.

\looseness=-1
In machine learning, symmetry has emerged as a powerful handle from which we can grapple with the challenges of generalization and data efficiency. Models which encode symmetries are called `equivariant', and their steady evolution has delivered significant performance improvements for data with known structure. 
Simultaneously, with the success of large language models, sequence modeling has become a prominent learning paradigm throughout machine learning sub-fields.
We therefore ask: can these everyday smooth, continuous motion symmetries be encoded in our modern sequence models and leveraged for improved performance? 
Surprisingly, to date, the study of equivariance has largely been limited to temporally static transformations, narrowly missing the opportunity to capture the continuous time-parameterized symmetry transformations that dominate our natural experience  (see Figure \ref{fig:static_vs_flow}).
In this work, we consider a carefully structured subset of natural motion transformations called flows: formally, one-parameter Lie subgroups. We introduce a formal notion of what it means for a sequence model to be `flow equivariant' and proceed to demonstrate that existing sequence models are indeed \emph{not} flow equivariant, \emph{even} when all sub-components of these models are individually equivariant with respect to the full `static' Lie group symmetry. 

\begin{wrapfigure}{r}{0.255\linewidth} 
  \vspace{-2.4ex}                      
  \centering
  \includegraphics[width=\linewidth]{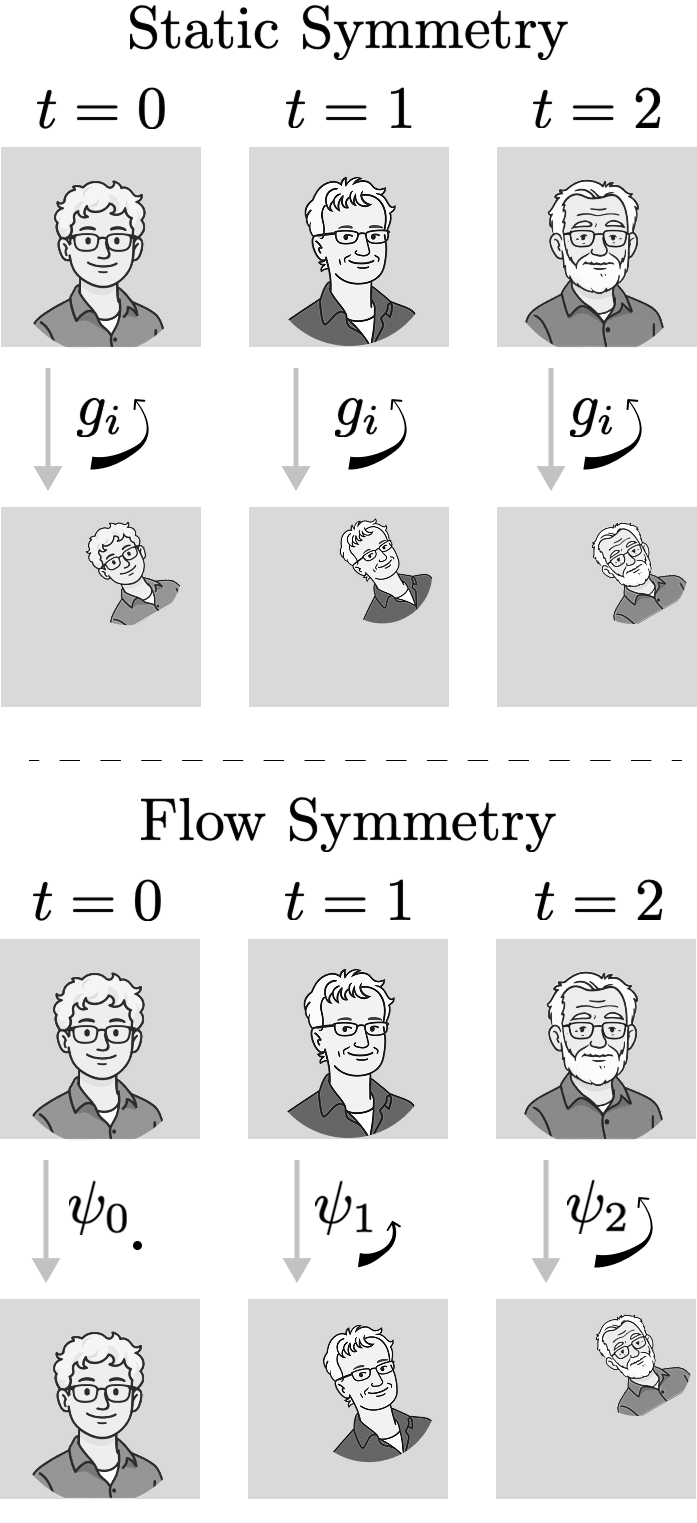}
  \caption{\textbf{Static vs. Flow Symmetries.} Static symmetries are constant while flows progress over time.}
  \label{fig:static_vs_flow}
  \vspace{-2ex}
\end{wrapfigure}
Motivated by this gap in the literature, and the potential advantages of integrating a ubiquitous natural symmetry into sequence models, in this work, we study how to build Recurrent Neural Networks (RNNs) that are explicitly equivariant with respect to time-parameterized symmetry transformations. At the highest level, we demonstrate that we are able to do so in a manner very similar to Group equivariant Convolutional Neural Networks (G-CNNs): by lifting the hidden state of our RNN to the dimension of flows, indexed by the generating vector fields $\nu$. 
In slices, our model resembles a bank of RNNs (one for each flow), but weight sharing across $\nu$ turns that bank into a single group-equivariant dynamical system which guarantees generalization to unseen flows. 

In the following, we will review the principle of equivariance in neural network architectures (\S \ref{sec:equivariance}), introduce the notion of `Flow Equivariance' (\S \ref{sec:flow_eq}), and demonstrate that existing RNNs are indeed not `flow equivariant' as we would desire. We will then introduce Flow Equivariant Recurrent Neural Networks (FERNNs) in \S \ref{sec:fernn}; and demonstrate empirically that FERNNs achieve zero-shot generalization to new flows at test time, improved length generalization, and improved performance on datasets which possess strong flow symmetries in \S \ref{sec:experiments}.
In our discussion, (\S \ref{sec:discussion}), we briefly outline related work on equivariance with respect to motion, and highlight the differences with our approach, but leave a thorough review to \S \ref{appendix:related_work}.
Finally, we conclude with a discussion on the potential relation of flow equivariance to traveling waves in biological neural networks. 
We provide an accompanying blog post\footnote{\scriptsize \href{https://kempnerinstitute.harvard.edu/research/deeper-learning/flow-equivariant-recurrent-neural-networks/}{https://kempnerinstitute.harvard.edu/research/deeper-learning/flow-equivariant-recurrent-neural-networks/}} with additional visualizations.

\section{Equivariant Neural Networks}
\label{sec:equivariance}
\looseness=-1
Informally, a network is said to be equivariant with respect to a set of structured transformations if every transformation of the input has a corresponding structured transformation of the output. 
The fact that this set of transformations is `structured' is important. Typically, this structure is the abstract algebraic structure known as a `group' $G$, meaning that any two transformations $g_i, g_j \in G$ will combine through a binary operation ($\cdot$) to form another known transformation $g_i \cdot g_j =g_k \in G$ (closure), and the `chunking' of transformations is arbitrary, i.e. $(g_i \cdot g_j) \cdot g_k = g_i \cdot (g_j \cdot g_k)$ (associativity). 
As we will see later, we will exploit both these properties when designing equivariant models. At an intuitive level, an equivariant neural network $\Phi$ is a network whose activations `live on the group'. This means that when $\Phi$ processes a transformed input, by associativity, this is equivalent to transforming the activations, and by closure, this transformation just yields another set of activations on the group -- i.e., a structured transformation of the output.

Formally, a neural network layer $\Phi$ is equivariant with respect to a symmetry group $G$ if the action of the group commutes with the application of the layer. Following prior work \citep{cohen}, we will define this layer as a map between function spaces $\Phi: \gF_K(X) \rightarrow \gF_{K'}(Y)$, where $\gF_K(X) := \{f: X \rightarrow \R^K\}$ denotes the set of signals defined on the abstract domain $X$ with $K$ channels. We will use the notation $\Phi[f]$ to denote the output of the model (in $\gF_{K'}(Y)$ space) for an input signal $f$, and $\Phi[f](y) \in \R^{K'}$ to similarly denote the value of that output function at location $y$. 
%
%
%
For $f \in \gF_K(X)$, we can then define the action of the group element $g \in G$ to be the left action:
\begin{equation}
\label{eqn:action}
    (g \cdot f)(x) := f(g^{-1} \cdot x), \quad g \in G,\  x \in X.
\end{equation}
\looseness=-1
Concretely, if $f$ is a 2D image, $X = \sZ^2$ the set of pixel coordinates, and $G = (\sZ^2, +)$ is the 2D translation group, then the translation action is: $(g \cdot f)(x) = f(x - g)$, a leftward shift of pixel coordinates.
%
%
%
%
In the output domain, the action of $g$ depends on how we construct the equivariant map; but as we will show next, for group equivariant convolutional layers, this ends up being the same as it as for input, i.e.  $(g\;\cdot\;\Phi[f])(y) = \Phi[f](g^{-1} \cdot y)$. Finally, the equivariance condition is then written as:
\begin{equation}
    \label{eqn:equivariance}
    g \cdot \Phi[f] = \Phi[g \cdot f] \quad \forall\, g \in G,\; f \in \gF_K(X) 
\end{equation}
We can see this is the formalization of our intuitive definition: the action of the group on the input $f$ can instead be pulled through to act on the lifted group indices of the network's output $\Phi[f]$.

\paragraph{Group Convolution.}
\label{sec:group_conv}
\looseness=-1
A simple example of such an equivariant map $\Phi$ is the group convolution \citep{cohen}: 
a generalization of the standard convolutional neural network (CNN) layer from the translation group to other arbitrary discrete groups.
To accomplish this, we define kernels to be functions on the group $G$, i.e. $\gW^i: G \rightarrow \R^K$, and perform `lifting' with kernels $\gU^i: X \rightarrow \R^K$ at the first layer of the network to `lift' from the input space to the group space.
Mathematically, we write the lifting convolution ($\hat{\star}_G$) and subsequent group convolutions ($\star_G$) as: 
\begin{equation}
\label{eqn:gconv}
    [f \ \hat{\star}_G\ \gU^i](g) = \sum_{x \in \sZ^2} \sum_{k=1}^K f_k(x) \gU_k^i(g^{-1} \cdot x) \quad \& \quad  [f \star_G \gW^i](g) = \sum_{h \in G} \sum_{k=1}^K f_k(h) \gW_k^i(g^{-1} \cdot h)
\end{equation}
Where $i$ and $k$ are the output and input channel indices of the convolution respectively. We see that the output of the layer is now defined over the group $G$, and we can verify that these layers are indeed equivariant with respect to Equation \ref{eqn:equivariance} (see \S \ref{appendix:equivariance_group_conv}). 
The important property being that $[(g \cdot f) \star_G \gW^i] = g \cdot [f \star_G \gW^i]$, that is, the convolution commutes with the group action. 

\paragraph{Group Equivariant Recurrent Neural Networks.}
In this work we define a recurrent neural network $h$ as a map between space-time function spaces $h: \gF_K(X, \sZ) \rightarrow \gF_{K'}(Y, \sZ)$, where $\gF_K(X, \sZ) := \{f: X \times \sZ \rightarrow \R^K\}$ denotes the set of time-varying $K$-dimensional signals defined over space $X$ and (discretized) time $t \in \sZ$. In the remainder of this paper we will use discrete time domains for our space-time signals, but we note that in order to accommodate continuous time sequence models, 
time may be made continuous ($t \in \R$) by changing the domain of $h$, with only minor changes to the analysis.
We can then define a Group equivariant RNN (G-RNN) as a simple RNN with group convolutions in place of both the usual linear recurrence and input maps. Explicitly: 
\begin{equation}
\label{eqn:grnn}
    h^i_{t+1}[f_{\leq t}] = \sigma\bigl(h_t[f_{< t}] \star_G \mathcal{W}^i + f_t\ \hat{\star}_G\ \mathcal{U}^i\bigr),
\end{equation}
where $\mathcal{W}^i: G \rightarrow \R^K$, and $ \mathcal{U}^i: G \rightarrow \R^K$ are the group convolutional kernels for the hidden-to-hidden mapping and input-to-hidden mapping respectively (for output channel $i$), and $h_0$ is initialized to be invariant to the group action, i.e. $h_0(g) = h_0(g')\ \forall g, g' \in G$. The notation $h_t[f_{<t}]$ implies that $h_t$ is a causal function of the input signal $f$ prior to time $t$. 
The non-linearity $\sigma$ is assumed group equivariant as well (pointwise). 
In \S \ref{sec:grnn_proof}, we prove by induction that this network is indeed equivariant with respect to action of the group $G$ on the input signal $f$, meaning $h^i_{t}[g \cdot f_{\leq t}] = g \cdot h^i_{t}[f_{\leq t}]\ \ \forall g \in G, t \in \sZ, f \in \gF_{K}(X, \sZ)$. Intuitively, this is because both the input-to-hidden mapping and the recurrent mapping are equivariant, meaning any constant transformation of the input can be pulled through both mappings (and the nonlinearity) yielding a corresponding transformation of the output (hidden state). 
Concretely, this implies that a recurrent neural network with convolutional recurrent connections, and convolutional encoders is indeed still equivariant with respect to a static translation of the entire input sequence. However, as we will show next, \emph{this does not imply that the RNN will be equivariant with respect to time-parameterized translation, i.e. motion}.

\section{Flow Equivariance}
\label{sec:flow_eq}

Like the pathline traced out by a droplet of dye carried along by a moving fluid, the flow $\psi_t(\nu)$ generated by a vector field $\nu$ gives the position of any point after time $t \in \R$. 
More formally, a flow is a one-parameter subgroup of a Lie group $G$, generated by an element $\nu$ of the corresponding Lie algebra $\mathfrak{g}$ of $G$ \citep{hall2015}.
We denote a generator by $\nu\in\mathfrak{g}$, and its time‑$t$ flow $\psi_t(\nu): \R \times \mathfrak{g} \rightarrow G$ by $\psi_t(\nu)=\exp\bigl(t\,\nu\bigr)\in G$, where $\exp$ is the exponential map.
By construction, the flow admits an identity at $t=0$, $\psi_0(\nu) = e$, and a composition property: $  \psi_s(\nu) \cdot \psi_t(\nu) = \psi_{s+t}(\nu)$, 
such that flowing for time $s$ and then $t$ is the same as flowing for time $s+t$. 
Intuitively, the flow $\psi_t(\nu)$ is the group element that transports a point from the identity to where it will be after time $t$ when moving with instantaneous velocity $\nu$. Colloquially, we therefore call $\psi$ a \emph{time-parameterized symmetry transformation}. 
For a given $f \in \gF_K(X, \sZ)$ and generator $\nu \in \mathfrak{g}$, we define a flow acting on $f$ as
\begin{align}
    \label{eqn:seq_action}
    (\psi(\nu) \cdot f)_t(x) & :=  f_t(\psi_t(\nu)^{-1} \cdot x).
\end{align}
and identically for a flow acting on a function in the output space $\gF_{K'}(Y, \sZ)$: $(\psi(\nu) \cdot \Phi[f])_t(y) := \Phi_t[f](\psi_t(\nu)^{-1} \cdot y)$.
Throughout this work we will use the notation $(\psi(\nu) \cdot f)$ and $(\psi(\nu) \cdot \Phi)$, leaving time implicit, to denote the flow acting on the entire time domain of the function simultaneously.
We can then define `flow equivariance' as the property of a sequence model such that its sequential output commutes with such time-parameterized transformations of the input:

\begin{definition}[\textbf{Flow Equivariance}]
\label{def:flow_equivariance} For a set of generators $V\subseteq\mathfrak{g}$, a sequence model $\Phi: \gF_K(X, \sZ) \rightarrow \gF_{K'}(Y, \sZ)$ is \emph{equivariant with respect to the flows generated by} $V$ iff 
\begin{equation}
\label{eqn:flow_equivariance}
\psi(\nu) \cdot \Phi[f] = \Phi[\psi(\nu) \cdot f] \quad \forall\, \nu \in V,\;f\in\gF_K(X, \sZ).
\end{equation}
\end{definition}

\paragraph{Flow Equivariance of Sequence Models.}
\label{sec:flow_eq_of_seq_models}
Our proposed Definition \ref{def:flow_equivariance} has a noteworthy special case which captures much of the literature to date on equivariance in sequence models. Specifically:
\begin{remark}[Frame-wise feed-forward equivariant models are trivially flow equivariant.]
Let $\Phi[f] = \left[\phi(f_0), \phi(f_1) \ldots \phi(f_T) \right]$ be a frame-wise applied equivariant map, with $\phi: \gF_K(X) \rightarrow \gF_{K'}(Y)$ such that: $\phi(\psi_t(\nu) \cdot f_t) = \psi_t(\nu) \cdot \phi(f_t) \ \forall t \in \sZ,\  \nu \in V$. Then $\Phi$ is flow equivariant by Definition  \ref{def:flow_equivariance}. 
\end{remark} 
See \S \ref{sec:fwcnn_equivariance_proof} for the short proof. Intuitively, one sees this very clearly for CNNs applied to video sequences frame-wise: since the CNN commutes with the transformation of each frame, and the transformation on the sequence distributes amongst the frames (by linearity of the action), the act of transforming the input frame-by-frame is equivalent to transforming the output frame-by-frame. 
Since this is achieved through a complete lack of sequence modeling in $\Phi$, we call it `trivial flow equivariance'. Equivalently, one can see that the G-RNN defined in Equation \ref{eqn:grnn} with $\gW=\mathbf{0}$ reduces to such a feed-forward equivariant map (since recurrence is removed), and is thus also trivially flow equivariant. 
To achieve more sophisticated flow equivariance in the context of sequence processing, we must use sequence models with non-zero recurrence. Notably, while in \S \ref{sec:equivariance} we show that group-equivariant RNNs do commute with static transformations, we show below that such models generally do not commute with time-parameterized transformations, meaning they are not flow equivariant. 
\\
\begin{theorem}[G-RNNs are not flow equivariant]
\label{thm:grnn_not_flow_eq}
A G-RNN as defined in Equation \ref{eqn:grnn}, with non-zero $\gW$, is not flow equivariant according to Definition \ref{def:flow_equivariance}, except in the degenerate flow-invariant case.
\end{theorem}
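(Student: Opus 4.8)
The plan is to argue by contradiction, running exactly the inductive argument that established static $G$-equivariance for the G-RNN (\S\ref{sec:grnn_proof}) and pinpointing the single place where it must break. Write $g_t := \psi_t(\nu) = \exp(t\nu)$, so that $g_0 = e$ and, by the one-parameter composition law, $g_{t+1} g_t^{-1} = \exp(\nu) = g_1$ for every $t$. Let $h_t := h_t[f_{<t}]$ and $\tilde h_t := h_t[\psi(\nu)\cdot f_{<t}]$ denote the hidden states driven by the clean and the flow-transformed input. Flow equivariance (Definition~\ref{def:flow_equivariance}) demands $\tilde h_t = g_t \cdot h_t$ for all $t$; I will show this identity is inconsistent with the recurrence whenever $\mathcal{W}\neq \mathbf{0}$ and $g_1 \neq e$.

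First I would dispatch the base case: $h_0$ is initialized $G$-invariant and $g_0=e$, so $\tilde h_0 = h_0 = g_0 \cdot h_0$. Next I would carry out a single recurrent update. Because $(\psi(\nu)\cdot f)_t = g_t \cdot f_t$ is a \emph{static} transformation of the $t$-th frame, the static equivariance of $\hat{\star}_G$ pulls $g_t$ out of the input branch, while any transformation inherited by $\tilde h_t$ is pulled through $\star_G$ by the static equivariance of the group convolution. Tracking the resulting group elements shows that the two branches of the pre-activation disagree by exactly $g_1$: the recurrent branch $\tilde h_t \star_G \mathcal{W}$ carries the \emph{lagged} element $g_{t-1}$ inherited from the previous step, whereas the fresh input branch $(\psi(\nu)\cdot f)_t\,\hat{\star}_G\,\mathcal{U}$ carries $g_t = g_1 g_{t-1}$. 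Concretely, at the first genuinely recurrent update one finds
\[ \tilde h_2 = \sigma\bigl(g_0\cdot(h_1 \star_G \mathcal{W}) + g_1\cdot(f_1\,\hat{\star}_G\,\mathcal{U})\bigr), \]
whereas flow equivariance requires the single-element form $g_2\cdot h_2 = \sigma\bigl(g_2\cdot(h_1\star_G\mathcal{W}) + g_2\cdot(f_1\,\hat{\star}_G\,\mathcal{U})\bigr)$. Since $g_0\neq g_1$ whenever $g_1\neq e$, the pre-activation is a sum of two \emph{differently} transformed signals, which no single group element can intertwine, and the nonlinearity $\sigma$ then freezes this mismatch so that it cannot be undone downstream.

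The step I expect to be the main obstacle is ruling out an \emph{accidental} cancellation — i.e.\ showing that the visible mismatch in the arguments of $\sigma$ genuinely survives to a mismatch in the outputs, rather than being washed out for all inputs. I would resolve this by separation of variables: in the update above the recurrent branch depends only on $f_0$ (through $h_1$) while the input branch depends only on $f_1$, and these frames may be varied independently. Equating the actual pre-activation with the required single-$g_2$ form for all such $f$ then forces the input branch to satisfy $g_1\cdot(f_1\,\hat{\star}_G\,\mathcal{U}) = g_2\cdot(f_1\,\hat{\star}_G\,\mathcal{U})$, hence $g_1=g_2$ and thus $\exp(\nu)=e$; and, using $\mathcal{W}\neq\mathbf{0}$, it forces $h_1\star_G\mathcal{W}$ to be $g_2$-invariant. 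Both conclusions collapse to $g_1 = e$, which is precisely the degenerate \emph{flow-invariant} case: there $\psi(\nu)\cdot f = f$ and the equivariance identity holds vacuously.

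For full rigor I would either invoke mild genericity of $\sigma$ and of the kernels $\mathcal{U},\mathcal{W}$ to legitimize the term-by-term matching, or — cleaner still — exhibit an explicit minimal counterexample (e.g.\ $G=(\sZ^2,+)$, a single-pixel input sequence, and a horizontal-motion generator $\nu$) for which $\psi(\nu)\cdot h[f]$ and $h[\psi(\nu)\cdot f]$ are already unequal at $t=2$. The role of the hypothesis $\mathcal{W}\neq\mathbf{0}$ is exactly to keep the recurrent branch alive: with $\mathcal{W}=\mathbf{0}$ the hidden state collapses to a function of a single frame and, after the obvious reindexing, reduces to the frame-wise map of the preceding Remark, which is trivially flow equivariant.
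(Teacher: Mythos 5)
Your proposal follows essentially the same route as the paper's own proof (\S\ref{sec:GRNN_not_flow_eq}): a proof by contradiction that re-runs the static-equivariance induction, observes that the recurrent branch carries a one-step-lagged group element relative to the input branch, exhibits the first mismatch at the second update, and matches terms inside $\sigma$ to conclude that the recurrent convolution output must be constant along the flow, i.e.\ $\gW = \bar{\gW}$. Your separation-of-variables device (varying $f_0$ and $f_1$ independently, since the recurrent branch depends only on $f_0$ and the input branch only on $f_1$) is a legitimate and somewhat more explicit justification of the term-by-term matching that the paper performs implicitly, and your treatment of the $\gW = \mathbf{0}$ case correctly mirrors the paper's remark on trivial flow equivariance.

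Two wrinkles are worth flagging. First, your output-action convention is off by one relative to the paper's: the paper takes $h_{t+1}[f_{\leq t}]$ to transform by $\psi_t(\nu)$ (the state at index $t+1$ summarizes inputs through frame $t$, whose last frame was moved by $\psi_t(\nu)$), and under that convention the input branches of $h_2[\psi(\nu)\cdot f]$ and $(\psi(\nu)\cdot h[f])_2$ match \emph{exactly}, so the entire contradiction comes from the single recurrent-branch identity $[h_1\star_G\gW](g) = \psi_1(\nu)\cdot[h_1\star_G\gW](g)$. Your no-lag convention $\tilde h_t = g_t\cdot h_t$ manufactures an additional constraint $g_1\cdot(f_1\ \hat{\star}_G\ \gU) = g_2\cdot(f_1\ \hat{\star}_G\ \gU)$ from the input branch, which is an artifact of the indexing rather than part of the phenomenon. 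Second, and more substantively, your wrap-up conflates two distinct escapes: from $g_1\cdot u = g_2\cdot u$ for all $u$ in the image of the lifting convolution, the dichotomy is either $g_1 = e$ (the flow itself is trivial, which is not the theorem's exception, since Definition~\ref{def:flow_equivariance} quantifies over all $\nu \in V$) or the image is invariant along the flow (constant kernels, which is the actual degenerate case). Writing that both conclusions ``collapse to $g_1 = e$, which is precisely the degenerate flow-invariant case'' mislabels the exception: the degenerate case is a property of the \emph{network} (constant $\bar{\gW}, \bar{\gU}$ rendering the hidden state spatially uniform) and holds for nontrivial $\nu$. Relatedly, the paper proves the exception half explicitly — Lemma~\ref{lemma:flow_invariant} plus an induction showing the constant-kernel G-RNN is genuinely flow \emph{invariant} — whereas your argument establishes only the contradiction direction; to fully substantiate the ``except'' clause of the theorem you would need that positive half as well.
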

\vspace{-2mm}

\begin{proofsketch}(Theorem \ref{thm:grnn_not_flow_eq})
As visualized in Figure \ref{fig:counter_example}, let $\sigma = \mathrm{Id}$, and $[* \,\ \hat{\star}_G\,\gU]$, $[*\,\star_G\,\gW]$ be the identity, such that the hidden state evolves simply as a sum of the past state and the new input: $h_{t+1}[f_{\leq t}] = h_t[f_{<t}] + f_t$. If the input is a static bump centered at $0$, $h_{t+1}[f_{\leq t}]$ will be an ever-growing bump, while $h_{t+1}[\psi(\nu) \cdot f_{\leq t}]$ will be a `train' of bumps, since, at each time step, the hidden state is `lagging behind' the input which has shifted by $\psi_1(\nu)$. Thus a corresponding shift of the growing bump will never equal the train of bumps, i.e. $(\psi(\nu) \cdot h[f_{\leq t}])_{t+1} \neq h_{t+1}[\psi(\nu) \cdot f_{\leq t}]$.
\end{proofsketch}

\begin{remark}(Degenerate flow invariance)
    If kernels $\gW$ \& $\gU$ are constant on $G$, then the outputs of Equation \ref{eqn:grnn} are spatially uniform. Since the flow action permutes only the G-index, these constants are fixed points, making the G-RNN flow invariant. 
\end{remark}

\begin{wrapfigure}{r}{0.6\linewidth} 
  \vspace{-3ex}                      
  \centering
  \includegraphics[width=\linewidth]{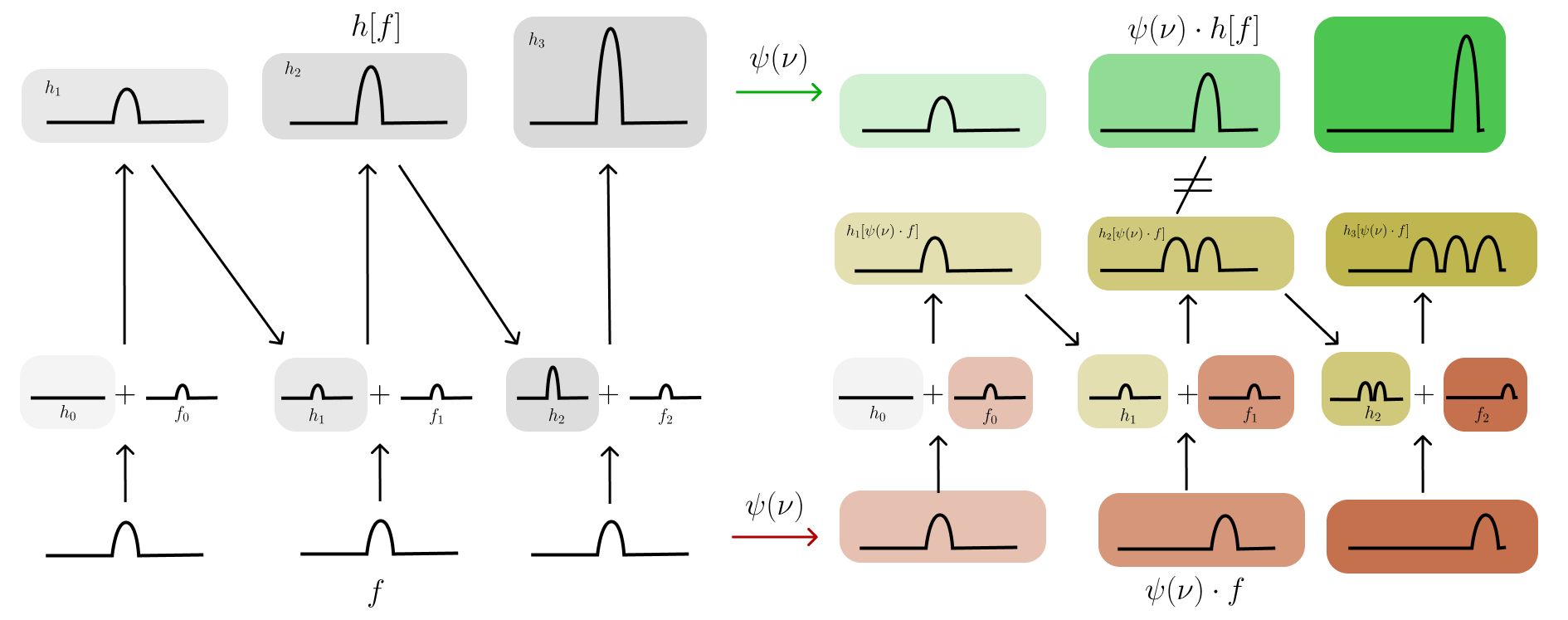}
  \caption{\textbf{G-RNNs are not generally flow equivariant.} We show this by simple counterexample with: $h_{t+1} = h_t + f_t$. See \S \ref{sec:GRNN_not_flow_eq} for the full proof.}
  \label{fig:counter_example}
  \vspace{-2ex}
\end{wrapfigure}
One can verify that this counterexample applies to most other existing sequence models, including state space models, gated recurrent neural networks, neural ODEs, and even transformers. In the following section, we show how to build a recurrent neural network which \emph{is} equivariant with respect to these transformations, and argue that this same construction can likely be carried over to other sequence models as well.

\section{Flow Equivariant Recurrent Neural Networks}
\label{sec:fernn}
In this section we introduce our Flow Equivariant Recurrent Neural Network (FERNN) and prove the equivariance condition (Definition \ref{def:flow_equivariance}) holds.
Inspired by the above analysis of the failure of G-RNNs to be flow equivariant, we note that a simple condition by which equivariance can be achieved is if computation is performed `in the reference frame' of the signal. For the case of static symmetry groups, we can interpret the group convolution as performing the same operation (dot product of filter and signal) in all possible statically transformed reference frames (group elements). In the case of time-parameterized symmetry groups, we can then assume that we desire to be performing the same recurrent update in all possible dynamically transformed (moving) reference frames.
To achieve this, we see that, analogous to the $g$-lift in the G-CNN, we must now lift our hidden state and input to the flow elements $\nu \in V$. To do so, we define the action of the instantaneous group elements $\psi_t(\nu) \in G$ on the lifted $V \times G$ space to be the same left action as before, only on the $G$ dimension. Explicitly, for $h_t(\nu, g) : V \times G \rightarrow \R^{K'}$, we define $\psi_t(\nu) \cdot h_t(\nu, g) := h_t(\nu, \psi_t(\nu)^{-1} \cdot g)$.

\paragraph{Input Flow-Lifting Convolution.}
To lift the input to the product group $V \times G$, we will re-use the original lifting group convolution from Equation \ref{eqn:gconv} exactly, and simply perform a `trivial lift' to the additional $V$ dimension by copying the output of the convolution for each value of $\nu$. Explicitly:
\begin{equation}
    \label{eqn:flow_input_lift}
    \setlength{\abovedisplayskip}{4pt}%
     \setlength{\belowdisplayskip}{4pt}%
     [f \ \hat{\star}_{V\times G}\ \gU^i](\nu, g) = \sum_{x \in X} \sum_{k=1}^K f_k(x) \gU_k^i(g^{-1} \cdot x)
\end{equation}
In \S \ref{sec:lifting_flow_conv_equivariance} we prove that this is equivariant with respect to the group action, yielding the following: 
\begin{equation}
\setlength{\abovedisplayskip}{4pt}%
 \setlength{\belowdisplayskip}{4pt}%
    \label{eqn:flow_input_lift_equivariance}
     \psi_t(\nu) \cdot [f_t \ \hat{\star}_{V\times G}\ \gU^i](\nu, g) = [(\psi_t(\nu) \cdot f_t) \ \hat{\star}_{V\times G}\ \gU^i](\nu, g) := [f_t \ \hat{\star}_{V\times G}\ \gU^i](\nu, \psi_t(\nu)^{-1} \cdot g) 
\end{equation}
We note that it is also possible to perform a non-trivial lift, where $\gU$ is defined on the lifted space $V \times G$, as in G-CNNs. This changes the later recurrence relation (Eqn. \ref{eqn:fernn}) necessary to achieve flow equivariance, but is largely a design choice. In the experiments and proofs we stick with the trivial lift, but describe the model with non-trivial lift in \S \ref{sec:non_trivial_lift}.

\paragraph{Flow Convolution.}
The group convolution can then be extended to our lifted hidden state as:
\begin{equation}
\label{eqn:flow_conv}
\setlength{\abovedisplayskip}{4pt}%
 \setlength{\belowdisplayskip}{4pt}%
\bigl[h \star_{V \times G} \gW^i\bigr](\nu, g)
   \;=\;
   \sum_{\gamma \in V}\sum_{m \in G} \sum_{k=1}^{K'}h_k(\gamma, m)\,
              \gW^i_k\!\bigl(\gamma-\nu, g^{-1} \cdot  m\bigr).
\end{equation}
We see that for the convolution over $V$, we have assumed that $V \subseteq \mathfrak{g}$ is an additive (abelian) discrete group, such that the combination $(-\nu) \cdot \gamma = \gamma - \nu$. 
If $V$ is not abelian, the general Baker-Campbell-Hausdorff product ($(-\nu) \cdot \gamma$) should be used instead \citep{hall2015}. Similarly, if $V$ is not discrete, one should perform an integral with respect to the Haar measure. In all experiments we use abelian $V$, and a discrete grid for $V$, so this is exact.
In \S \ref{sec:flow_conv_equivariance} we prove this is equivariant with respect to the action of individual flow elements. Explicitly: 
\begin{equation}
\label{eqn:flow_conv_eqivariance}
    \bigl[(\psi_t(\nu) \cdot h) \star_{V \times G} \gW^i\bigr](\nu, g) = \psi_t(\nu) \cdot \bigl[h \star_{V \times G} \gW^i\bigr](\nu, g)
\end{equation}
Intuitively, this is simply because the individual group elements along the flow only act on the $G$ coordinate of the signal, and thereby commute with the standard group convolution.

\paragraph{Flow Equivariant Recurrence Relation.}
We define the recurrence relation for the FERNN as:
\begin{equation}
\label{eqn:fernn}
    h_{t+1}(\nu, g) = \sigma\bigl(\psi_1(\nu) \cdot \left[h_t \star_{V \times G} \mathcal{W}\right](\nu, g) + \left[f_t\ \hat{\star}_{V \times G}\ \mathcal{U}\right](\nu, g)\bigr).
\end{equation}

We see that there are two primary differences between this model and the G-RNN: the extra dimension $\nu$, and the action of the instantaneous one-step flow element $\psi_1(\nu)$ at each position $\nu$ of the lifted hidden state. Intuitively, this can be understood as a bank of G-RNNs, each flowing autonomously according to their corresponding vector fields $\nu$ simultaneously. Similar to G-CNNs then, when an input arrives undergoing a given flow transformation with generator $\hat{\nu}$, this construction allows us to `pull out' this transformation into a shift of the $\nu$ dimension. Formally: 

\begin{theorem}(FERNNs are flow equivariant)
\label{thm:fernn_flow_equivariance}
Let $h[f] \in \gF_{K'}(Y, \sZ)$ be a FERNN as defined in Equations \ref{eqn:flow_input_lift}, \ref{eqn:flow_conv}, and \ref{eqn:fernn}, with hidden-state initialization invariant to the group action and constant in the flow dimension, i.e. $h_0(\nu, g)  = h_0(\nu',g)\ \forall\, \nu', \nu \in V$ and $\psi_1(\nu) \cdot h_0(\nu, g) = h_0(\nu, g) \ \forall \,\nu \in V,\,g\in G$. 
Then, $h[f]$ is flow equivariant according to Definition \ref{def:flow_equivariance} with the following re-defined representation of the action of the flow in the output space:
\begin{equation}
    (\psi(\hat{\nu}) \cdot h[f])_t(\nu, g) =  h_t[f](\nu - \hat{\nu}, \psi_{t-1}(\hat{\nu})^{-1} \cdot g)
\end{equation}
\end{theorem}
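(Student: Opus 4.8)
The plan is to prove the claim by induction on the time index $t$, taking the asserted output action as the inductive hypothesis itself: that for every $\hat{\nu}\in V$, $\nu\in V$, and $g\in G$,
\[
h_t[\psi(\hat{\nu})\cdot f](\nu, g) \;=\; h_t[f](\nu-\hat{\nu},\, \psi_{t-1}(\hat{\nu})^{-1}\cdot g).
\]
Establishing this for all $t$ is exactly equivalent to Definition \ref{def:flow_equivariance} under the re-defined output representation, so nothing further is needed once the induction closes. The base case $t=0$ follows from the initialization hypotheses: since $h_0$ does not depend on $f$, both sides reduce to $h_0$, and the assumptions that $h_0$ is constant in the flow dimension and invariant under the group action make the shift in $\nu$ and the residual $G$-action on $g$ act trivially.

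For the inductive step I would expand $h_{t+1}[\psi(\hat{\nu})\cdot f](\nu, g)$ via the recurrence \ref{eqn:fernn} and treat its two summands separately. The input term $[(\psi(\hat{\nu})\cdot f)_t\ \hat{\star}_{V \times G}\ \gU](\nu, g)$ is the easy one: because the lift is trivial in $\nu$ it is just the ordinary $G$-lifting convolution, so the equivariance already recorded in Equation \ref{eqn:flow_input_lift_equivariance} converts the flow transformation of the frame into the $G$-shift $[f_t\ \hat{\star}_{V \times G}\ \gU](\nu,\, \psi_t(\hat{\nu})^{-1}\cdot g)$, and triviality in $\nu$ lets me rewrite the first argument as $\nu-\hat{\nu}$, matching the input term of the target $h_{t+1}[f](\nu-\hat{\nu},\, \psi_t(\hat{\nu})^{-1}\cdot g)$.

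The recurrence term is the crux. Substituting the inductive hypothesis, the argument of the flow convolution is $h_t[f]$ acted on by the twisted operator $(\gamma, m)\mapsto(\gamma-\hat{\nu},\, \psi_{t-1}(\hat{\nu})^{-1}\cdot m)$, which factors as a pure $V$-shift composed with the $G$-action $\psi_{t-1}(\hat{\nu})$. A single change of variables $(\gamma,m)\mapsto(\gamma+\hat{\nu},\,\psi_{t-1}(\hat{\nu})\cdot m)$ in the double sum of Equation \ref{eqn:flow_conv} shows the convolution intertwines both pieces at once: the $V$-shift passes through as a shift $\nu\mapsto\nu-\hat{\nu}$ of the output index (using that $\gW$ depends on $\gamma-\nu$), while the $G$-action passes through exactly as in the flow-convolution equivariance \ref{eqn:flow_conv_eqivariance}. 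This yields $[h_t[\psi(\hat{\nu})\cdot f]\star_{V \times G}\gW](\nu, g) = [h_t[f]\star_{V \times G}\gW](\nu-\hat{\nu},\, \psi_{t-1}(\hat{\nu})^{-1}\cdot g)$.

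The final and most delicate point is reconciling the per-slice flow operators, and I expect this to be the main obstacle. On the transformed side the recurrence applies $\psi_1(\nu)$ at slice $\nu$, producing the $G$-argument $\psi_{t-1}(\hat{\nu})^{-1}\psi_1(\nu)^{-1}\cdot g$; on the target side the recurrence for $h_{t+1}[f]$ at slice $\nu-\hat{\nu}$ applies $\psi_1(\nu-\hat{\nu})$, producing $\psi_1(\nu-\hat{\nu})^{-1}\psi_t(\hat{\nu})^{-1}\cdot g$. These coincide precisely because of the one-parameter composition law $\psi_s(\nu)\psi_t(\nu)=\psi_{s+t}(\nu)$ together with the commutativity of the flows generated by the abelian set $V$, which give
\[
\psi_{t-1}(\hat{\nu})^{-1}\,\psi_1(\nu)^{-1} \;=\; \psi_1(\nu-\hat{\nu})^{-1}\,\psi_t(\hat{\nu})^{-1}.
\]
This telescoping identity is exactly where the otherwise mysterious choice of applying the single-step element $\psi_1(\nu)$ inside the recurrence pays off, converting the accumulated product of per-step shifts into the clean $\psi_t(\hat{\nu})$ demanded by the output action; it is also the one step that genuinely requires $V$ to be abelian (otherwise a Baker–Campbell–Hausdorff correction appears). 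With both summands matched and $\sigma$ pointwise, applying $\sigma$ preserves the index reparametrization and closes the induction.
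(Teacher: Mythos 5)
Your proposal is correct and takes essentially the same route as the paper's own proof: induction with the asserted output action as the hypothesis, a base case from the constant and flow-invariant initialization, separate treatment of the trivially-lifted input term and the flow-convolution term, and closure of the induction via the same abelian telescoping identity $\psi_1(\nu)\,\psi_{t-1}(\hat{\nu})=\psi_t(\hat{\nu})\,\psi_1(\nu-\hat{\nu})$ pulled through the pointwise nonlinearity. The only cosmetic difference is that you pass the $V$-shift and the $G$-action through the convolution in a single change of variables, where the paper invokes its two equivariance lemmas (Equations \ref{eqn:flow_input_lift_equivariance} and \ref{eqn:flow_conv_eqivariance}) and the $\gamma-\nu$ kernel structure in separate steps.
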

In words, the equivariance condition $h_t[\psi(\hat{\nu}) \cdot f] (\nu, g) =  h_t[f](\nu- \hat{\nu}, \psi_{t-1}(\hat{\nu})^{-1} \cdot g)$ 
means that the hidden state of the FERNN processing a flowing input sequence has a corresponding flow along the $G$ dimension, and a shift along the $V$ dimension, both determined by the input flow generator $\hat{\nu}$. 
In \S \ref{appendix:fernn_eq_proof} we prove Theorem \ref{thm:fernn_flow_equivariance} by induction, and give a brief sketch below. 

\begin{proofsketch}(Theorem \ref{thm:fernn_flow_equivariance})
We wish to prove  $h_t[\psi(\hat{\nu}) \cdot f] (\nu, g) =  h_t[f](\nu- \hat{\nu}, \psi_{t-1}(\hat{\nu})^{-1} \cdot g)\ \ \forall t$. 
Letting $H = V \times G$, we write the recurrence relation for the transformed input as:
\begin{equation}
\label{eqn:fernn_on_flowing_input}
h_{t+1}[\psi(\hat{\nu}) \cdot f](\nu, g) = \sigma\bigl(\psi_1(\nu) \cdot \left[h_t[\psi(\hat{\nu}) \cdot f_{<t}] \star_{H} \mathcal{W}\right](\nu, g) + \left[(\psi_t(\hat{\nu}) \cdot f_t)\ \hat{\star}_{H}\ \mathcal{U}\right](\nu, g)\bigr).
\end{equation}
Assume we initialize the hidden state to a fixed point of the flow-element action $\psi_1(\nu) \cdot h_0(\nu, g) = h_0(\nu, g)$, constant in $\nu$.
We can see that the base case 
is satisfied trivially since $h_0[\psi(\hat{\nu}) \cdot f_{<0}](\nu, g) = h_0[f_{<0}](\nu-\hat{\nu}, \psi_{-1}(\hat{\nu})^{-1} \cdot g)$ by constancy.
We then assert the inductive hypothesis, allowing us to substitute $h_t[\psi(\hat{\nu}) \cdot f] (\nu, g) \Rightarrow  h_t[f](\nu - \hat{\nu}, \psi_{t-1}(\hat{\nu}) ^{-1} \cdot g) = \psi_{t-1}(\hat{\nu}) \cdot h_t[f](\nu-\hat{\nu}, g)$ into equation \ref{eqn:fernn_on_flowing_input}. Similarly, due to the trivial lift and the equivariance of lifting convolution, we can substitute $\left[(\psi_t(\hat{\nu}) \cdot f_t)\ \hat{\star}_{V \times G}\ \mathcal{U}\right](\nu, g) \Rightarrow \psi_t(\hat{\nu}) \cdot \left[f_t\ \hat{\star}_{V \times G}\ \mathcal{U}\right](\nu - \hat{\nu}, g)$. Ultimately, this gives us:
\begin{equation}
\setlength{\abovedisplayskip}{4pt}%
 \setlength{\belowdisplayskip}{4pt}%
\label{eqn:fernn_flowing_input_substituted}
(\ref{eqn:fernn_on_flowing_input}) = \sigma\bigl(\psi_1(\nu) \cdot \psi_{t-1}(\hat{\nu}) \cdot \left[h_t[f_{<t}] \star_{H} \mathcal{W}\right](\nu - \hat{\nu}, g) + \psi_t(\hat{\nu}) \cdot\left[f_t\ \hat{\star}_{H}\ \mathcal{U}\right](\nu - \hat{\nu}, g)\bigr).
\end{equation}
By the properties of flows, we know that $\psi_1(\nu) \cdot \psi_{t-1}(\hat{\nu}) = \psi_1(\nu) \cdot \psi_{-1}(\hat{\nu}) \cdot  \psi_{t}(\hat{\nu}) = \psi_t(\hat{\nu}) \cdot \psi_{1}(\nu-\hat{\nu})$, and thus we can pull out a factor of $\psi_t(\hat{\nu})$ from both terms, leaving us with the original recurrence defined over a new $V$ index $\nu-\hat{\nu}$, exactly the shift in $V$.
\end{proofsketch}

Intuitively, the idea is that we would like to build a sequence model which fixes the lack of equivariance demonstrated by the counterexample in Figure \ref{fig:counter_example}. To accomplish this, we note that the hidden state of the model in Figure \ref{fig:counter_example} appears to be lagging behind the input at each time step. 
To fix this, we therefore propose to augment our hidden state with a bank of flows, each generated by a separate vector field $\nu$ such that when the input arrives at a specific velocity, it will exactly match one of these flow parameters, effectively shifting the `zero' point of the moving coordinates of our RNN to this new flow. Since these flows are relative quantities (obeying the group law), we see that all the other $\nu$ channels in the bank will similarly shift according to their relative position in $V$ space (e.g. relative velocity). These effects combined lead to the output representation we observe analytically. In Figure \ref{fig:fernn_visual_proof} we visualize the model and the same example of a moving bump from Figure \ref{fig:counter_example}, providing a clear demonstration of what the action of the flow on the hidden state looks like in practice.

We finally note that since the hidden state lives on the product group $V \times G$, and the input is similarly lifted to the product group, one can simply stack these layers, replacing the lifting convolution with another flow convolution and maintain flow equivariance. Since a composition of equivariant maps is itself an equivariant map \citep{kondor2018generalization}, a deep FERNN is also flow equivariant (provided all non-linearities, normalization layers, and other additional architectural components are additionally flow equivariant -- we refer readers to \citep{bronstein2021geometric} for an extensive review). 

\paragraph{Practical Implementation.}
\label{sec:truncation}
We note that exact equivariance requires that the group $V$ be closed under the action of the flow on the output. In theory, this can be satisfied by many constructions, but in practice, we cannot lift our hidden state to an infinite dimensional group. Therefore, similar to prior work \citep{worrall2019deepscalespacesequivariancescale}, we implement a truncated version of the group $V$, and incur equivariance error at the boundaries. Furthermore, in the majority of experiments in this work, we define the flow convolution to not mix the $V$ channels at all, explicitly: $\gW^i_k\!\bigl(\nu, g\bigr) = \delta_{\nu=e}\gW^i_k\!\bigl(g\bigr)$.
We find that this is beneficial to not propagate errors between the $\nu$ channels induced by the truncation, similar to prior work on scale equivariance \citep{sosnovik2020scaleequivariantsteerablenetworks}. In Appendix \S \ref{appendix:kth_variable_velocity} we demonstrate cases where including $V$-mixing helps modeling inter-velocity interactions (e.g. object collisions).

\section{Experiments}
\label{sec:experiments}
In the following, we introduce datasets with known flow symmetries and study how adding flow equivariance impacts performance compared with non-equivariant baselines. We investigate two sequence datasets: (i) next-step prediction on a modified `Flowing MNIST' dataset with 2 simultaneous digits undergoing imposed translation and rotation flows \citep{mnist}, and (ii) sequence classification on the KTH human action recognition dataset \citep{schuldt2004recognizing}, augmented with additional translation flows to simulate camera motion. 
In \S \ref{sec:Experiment_Details} we include the full details of the dataset creation, model architectures, training, and evaluation procedures; and in \S \ref{sec:extended_results} we include extended results. Code to reproduce all results is available at: \url{https://github.com/akandykeller/FERNN}. 

\paragraph{Flowing MNIST Datasets.}
We construct a number of variants of the Flowing MNIST dataset to test generalization, each defined by the set of generators we use for training and evaluation ($V_{train}$, $V_{test}$). In words, sequences are composed of two digits moving with separate random translation or rotation velocities. For 2D translation flows, we always use the integer lattice up to velocities $N \tfrac{pixels}{step}$: $V^{T}_N = \{\nu  \in \sZ^2\   |\  ||\nu||_{\infty} \leq N\}$. For rotation flows, we discretize the rotation algebra $\mathfrak{so}(2)$ at $\Delta \theta = 10^{o}$ intervals, yielding: $V^{R}_N = \{k \Delta\theta J | k = -N, \ldots N\}$, where $J = \bigl(\begin{smallmatrix}0 & -1\\1 & 0\end{smallmatrix}\bigr)\  \in\  \mathfrak{so}(2)$. See Figures \ref{fig:len_gen_samples} \& \ref{fig:v_generalization} for examples. A train/test example from the dataset is a sequence $f_t$ constructed by drawing two random static samples $f^1, f^2$ from the original MNIST dataset and two corresponding random vector fields $\hat{\nu}^1, \hat{\nu}^2$ uniformly from the corresponding $V_{*}$, applying the action of the corresponding flows to each sample, and summing the results to generate the timeseries: $f_t = \psi_t(\hat{\nu}^1) \cdot f^1 + \psi_t(\hat{\nu}^2) \cdot f^2$. 

\looseness=-1
\paragraph{Next-Step Prediction Task.}
Models are trained to solve an autoregressive forward prediction task: given the first $T=10$ time-steps of the timeseries $f$, predict the next 10 time-steps. Model parameters are optimized to minimize the mean-squared error (MSE) between predictions and the ground truth for 50 epochs. In the length generalization experiments of Figure \ref{fig:len_gen_samples}, we evaluate the model's MSE on forward prediction up to 70 steps, significantly longer than the 10 seen in training. 

\paragraph{Next-Step Prediction Models.}
We use G-RNNs and FERNNs exactly as defined in Equations \ref{eqn:grnn} \& \ref{eqn:fernn} respectively, where $G$ is defined as the standard 2D translation group for the translation flow datasets $G = (\sZ^2, +)$, and $SO(2)$ with translations ($SE(2)$) for the rotation flow datasets. We use the \emph{escnn} library to implement the $SE(2)$ convolutions \citep{escnn}. We set $\sigma=\mathrm{ReLU}$, use a single recurrent layer, a hidden state size $K'$ of 128 channels, and $(3\times 3)$ convolutional kernels in the spatial dimensions. We note that the spatial dimensions of the input are preserved throughout the entire forward pass, i.e. there is no spatial pooling, stride, or linear layers. We augment these models with a small 4-layer CNN decoder $g_\theta(h_{t+1})$ with 128 channels, and ReLU activations, which maps from each hidden state to the corresponding output. Explicitly: $g_\theta(h_{t+1}) = \hat{f}_{t+1}$, where the training loss is the MSE between $\hat{f}_{t+1}$ and $f_{t+1}$ averaged over time $t \in {[11, 20]}$. For the FERNN models, we also crucially `max-pool' over the $V$ dimension of the hidden state before decoding, e.g. $g_\theta(\max_\nu h_{t+1}(\nu, g)) = \hat{f}_{t+1}$. This means that all G-RNN and FERNN models have exactly the same number of parameters, since the FERNN can be seen to share its parameters over the different flow elements. For each FERNN, we denote its corresponding set of generators with FERNN-$V^*_N$. 

\paragraph{Next-Step Performance.}
In Figure \ref{fig:train_loss} we show the validation loss curves for these models on Flowing MNIST, and in Table \ref{tab:mnist_mse} we show the final MSE on the test sets (mean $\pm$ std. over 5 random initializations). We see that on both datasets, FERNNs dramatically outperform G-RNNs, reaching nearly an order of magnitude lower error, and doing so in significantly fewer iterations. We also see that even partial equivariance (e.g. FERNN-$V^T_1$ on $V^T_2$) is highly beneficial to model performance.

\begin{figure}[t]
  \centering
  %
  \begin{minipage}[t]{0.66\linewidth}
  \vspace{0pt}
    \centering
    \includegraphics[width=\linewidth]{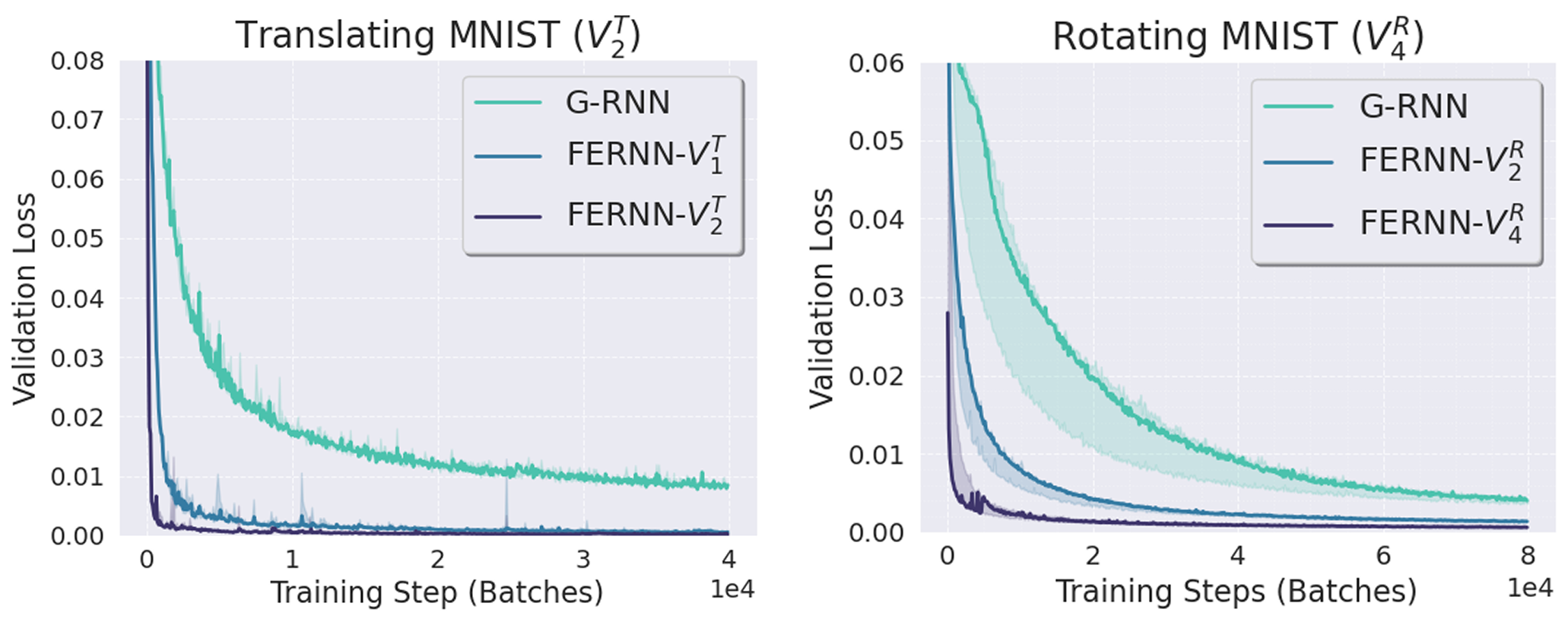}
    \captionof{figure}{\textbf{Increased flow equivariance increases training speed on data with flow symmetry}. Validation loss vs. train steps.}
    \label{fig:train_loss}
  \end{minipage} \hfill
\begin{minipage}[t]{0.3\linewidth}
  \vspace{-5pt}
  \centering
  {\setlength{\tabcolsep}{4pt}%
    \fontsize{8.3pt}{9.8pt}\selectfont
   \renewcommand{\arraystretch}{1.0}%
   \vspace{3mm}
   \begin{tabular}{lc}
     \toprule
     \textbf{Model} & \textbf{Test MSE $\downarrow$} \\
     \midrule
     \multicolumn{2}{c}{$V_{train}=V_{test}=V^{T}_2$} \\
     G-RNN          & $8.1e{-3}\pm6e{-4}$ \\
     FERNN-$V^T_1$  & $5.3e{-4}\pm8e{-5}$ \\
     FERNN-$V^T_2$  & $\mathbf{1.5e{-4}\pm2e{-5}}$ \\
     \midrule
     \multicolumn{2}{c}{$V_{train}=V_{test}=V^{R}_4$} \\
     G-RNN    & $4.0e{-3}\pm5e{-4}$ \\
     FERNN-$V^R_2$  & $1.3e{-3}\pm5e{-5}$ \\
     FERNN-$V^R_4$  & $\mathbf{6.1e{-4}\pm3e{-5}}$ \\
     \bottomrule
   \end{tabular}%
  }
  \captionof{table}{{\fontsize{8.3pt}{9.8pt}\selectfont \textbf{Flowing MNIST Test MSE.} (mean $\pm$ std, 5 seeds)}}
  \label{tab:mnist_mse}
  \vspace{2mm}
\end{minipage}  
\end{figure}

\begin{figure}
  \vspace{-2ex}                      
  \centering
  \includegraphics[width=1.0\linewidth]{figs/len_gen_2digits-2.pdf}
  \caption{\textbf{FERNNs exhibit next-step prediction length-generalization capabilities far surpassing G-RNNs on simple flows}. We plot samples for the forward prediction trajectories of a G-RNN and FERNN-$V^{T}_2$ trained on Translating MNIST $V_2^T$ to predict 10-steps into the future (down-sampled by a factor of 4 in time for visualization). We see the G-RNN performs well on this training regime but diverges rapidly with lengths longer than training. The FERNN generalizes nearly perfectly. On the right, we plot this forward prediction error vs. the forward prediction timestep for both models.}
  \label{fig:len_gen_samples}
  \vspace{2mm}
\end{figure}

\paragraph{Length Generalization}
In Figure \ref{fig:len_gen_samples} we plot generated forward predictions on Translation-Flow MNIST with $V_{train} = V_{test} = V^{T}_{2}$ for both the G-RNN and FERNN-$V^T_2$ models compared with the ground truth sequence. The length of sequences seen in training is indicated by the shaded gray area, and at test time we autoregressively generate sequences from the models which are significantly longer than this. We see that the G-RNN predictions begin to degrade after the training length, while the FERNN remains highly consistent. We quantify this error for each forward prediction step T in the adjacent plot, demonstrating that indeed FERNNs dramatically outperform G-RNNs in length generalization. We note that the performance improvement of FERNNs over G-RNNs is more significant for translation flows compared with rotation flows in our experiments. Particularly, we do not see quite as strong length generalization on Rotating MNIST (see \S \ref{sec:extended_results}); however, we speculate that this is due to the interpolation procedure necessary to implement small rotations on a finite grid, not due to any difference in the underlying theory. Interpolation of rotating pixel grids is known to break exact equivariance, and we suggest prior methods for addressing this issue in G-CNNs may be equally applicable to FERNNs \citep{learning_to_convolve}.

\paragraph{Velocity Generalization}
\looseness=-1
Similar to standard group equivariant neural networks, weight sharing induces automatic generalization to previously unseen group elements. In the case of flow equivariance, we then expect zero-shot generalization to flow generators that are part of the lifted hidden state's $V$ but not seen during training. In Figure \ref{fig:v_generalization}, we show example sequences which demonstrate this to be the case. Specifically, the models are trained on low velocity examples ($V^{T}_1$ and $V^{R}_2$), and evaluated on the full set of higher velocity generators (from  $V^{T}_2$ and $V_5^{R}$ respectively). In the adjacent plots, we show the test MSE for sequences flowing according to each generator. We see that the FERNNs (middle) generalize nearly perfectly, while the G-RNNs (bottom) only learn to forward predict in-domain flows, and fail to generalize outside. Furthermore, in Figure \ref{fig:v_interp}, we evaluate the performance of FERNNs trained and tested on velocities outside of their integrated $V$ set. We see that while error increases slightly, FERNNs can generally interpolate between integrated velocity values to yield significantly improved performance on a wider range of data. 

\begin{figure}[h!]
  \vspace{-2mm}                      
  \centering
  \includegraphics[width=1.0\linewidth]{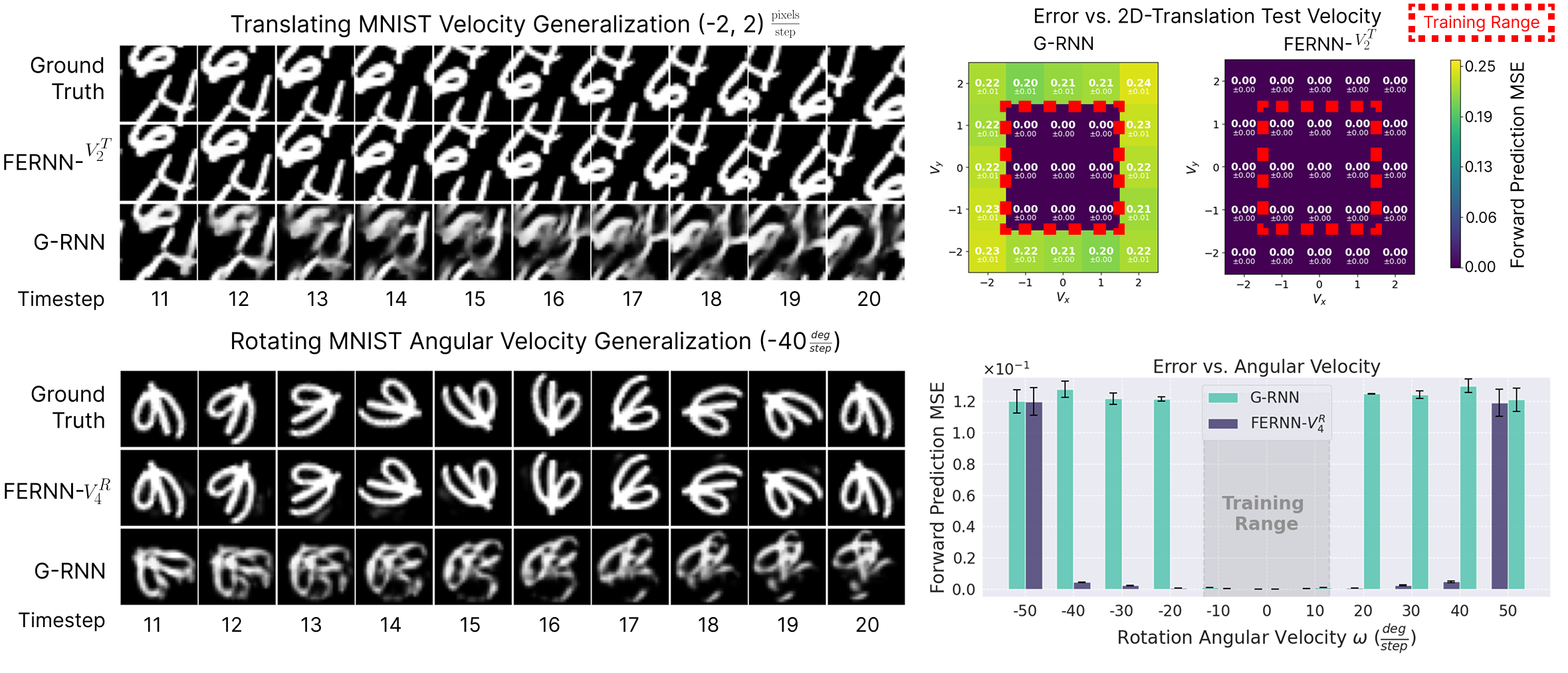}
  \vspace{-6mm}
  \caption{\textbf{FERNNs exhibit next-step prediction generalization to previously unseen translation and rotation flow velocities where G-RNNs fail}. Samples of forward predictions for FERNNs ($V^{T}_2$ \& $V^{R}_4$) trained on Translating \& Rotating MNIST with limited flow velocities ($V^{T}_{1}$, $V^{R}_{1}$) and tested on a wider range of velocities ($V^T_2$ \& $V^R_5$). We see the FERNNs achieve near perfect forward-prediction performance within their $V$ set. On the right we plot the full distribution of errors across $V_{test}$.} 
  \label{fig:v_generalization}
  \vspace{2mm}
\end{figure}

\begin{figure}[h!]
  \vspace{-4mm}                      
  \centering
  \includegraphics[width=1.0\linewidth]{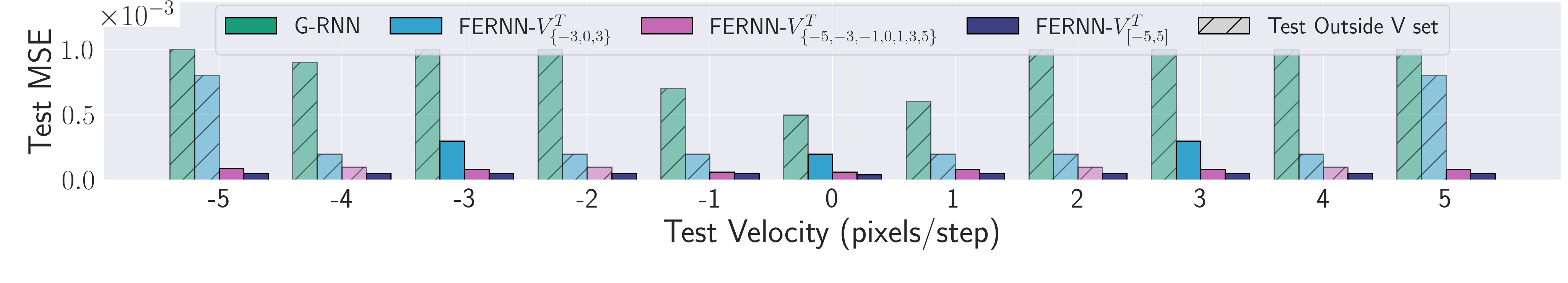}
  \vspace{-6mm}
  \caption{\textbf{FERNNs can interpolate between $\nu \in V$}. Test MSE for FERNNs trained and tested on $x$-translation velocity flows from $-5$ to $+5$, with varying $V$-set coverage (see legend, and \S \ref{appendix:mnist_details} for details). Hatched bars indicate the test velocity is outside the model's equivariant set.} 
  \label{fig:v_interp}
  \vspace{-7mm}
\end{figure}

\begin{figure}[t]
  \vspace{-2mm}                      
  \centering
  \includegraphics[width=1.0\linewidth]{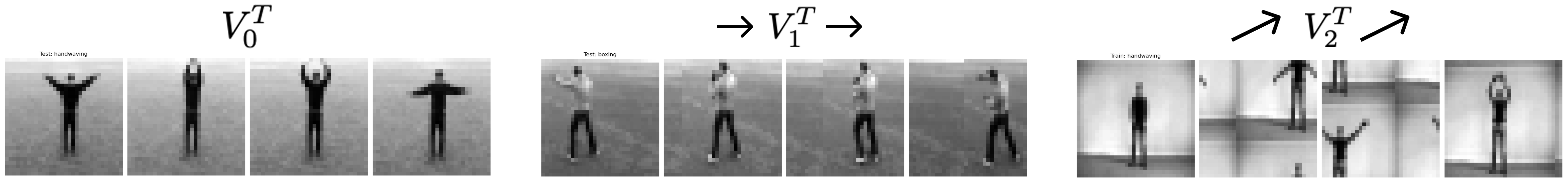}
  \caption{\textbf{Moving KTH Dataset Samples}. Translation flow augmentation to simulate camera motion.} 
  \label{fig:kth_movemment_samples}
  \vspace{2mm}
\end{figure}

\begin{wrapfigure}{r}{0.4\linewidth}   
  \begin{minipage}{\linewidth}
    \centering

    \footnotesize
    \begin{tabular}{lcc}
      \toprule
      \textbf{Model}    & \textbf{Test Acc.} $\uparrow$ & \# $\boldsymbol{\theta}$ \\
      \midrule
      3D-CNN            & 0.626 $\pm$ 0.02 & 209K \\
      G-RNN+            & 0.639 $\pm$ 0.02 & 242K \\
      G-RNN             & 0.665 $\pm$ 0.03 & 242K \\
      FERNN-$V^T_1$     & 0.698 $\pm$ 0.03 & 242K \\
      FERNN-$V^{T}_2$   & \textbf{0.716} $\pm$ \textbf{0.04} & 242K \\
      \bottomrule
    \end{tabular}
    \captionof{table}{\textbf{FERNNs outperform non-equivariant models on sequence classification in the presence of strong motion}. Test accuracy (mean $\pm$ std) for models trained and tested on the Moving KTH dataset ($V_2^T$).}
    \label{tab:kth}
    
    \vspace{5mm}

    \includegraphics[width=\linewidth]{figs/kth_relabeled.pdf}
    \captionof{figure}{\textbf{FERNNs outperform non-equivariant models on action recognition undergoing out-of-distribution flows (orange)}. Test accuracy of models trained on KTH with no motion $V^{T}_0$ (top row) and with motion $V^{T}_1$ (bottom row); tested on $V^{T}_0$, $V^{T}_1$, $V^{T}_2$ (columns).}
    \label{fig:kth_acc}

  \end{minipage}
  \vspace{-10mm}                        
\end{wrapfigure}
\paragraph{Moving KTH Action Recognition.}
Finally, we test our FERNN model's ability to classify natural video sequences with induced motion using the simple task of Action Recognition from the KTH dataset. The dataset is composed of videos of 25 people performing 6 different actions. We use the traditional train-val-test split, by person, requiring strong generalization to new individuals. We down-sample each clip by a factor of 2 in space and time for computational convenience, leading to grayscale sequences $f$ of size 32x32, with $16$ time-steps each. In our experiments, we apply flows from $V_1^T$ and $V_2^T$ to the dataset as an attempt to emulate slow camera motion (see Figure \ref{fig:kth_movemment_samples}). On this dataset, differently from MNIST, we do not re-sample velocities per example, leading to a more realistic restricted dataset. In Appendix \S \ref{appendix:kth_with_augmentation}, we include results with this velocity data augmentation.

\paragraph{Action Recognition Models.}
\looseness=-1
In Figure \ref{fig:kth_acc} and Table \ref{tab:kth}, we compare the accuracy of FERNN models with non-flow-equivariant counterparts (G-RNN, G-RNN+, \& 3D-CNN), each with comparable parameter counts. The G-RNN and FERNN-$V_*^T$ models are identical to those used on the Translating MNIST dataset with the addition of a 3-Layer CNN encoder (in place of the input convolution $\gU$), and a class-readout module composed of an average-pooling over the spatial dimensions followed by a linear layer. The G-RNN+ model is an ablation of the FERNN-$V_2^T$ which has the same lifted hidden state and max-pooling over the lifted dimensions, but instead of applying the transformation $\psi_1(\nu)$ to each hidden index $\nu$, the G-RNN+ learns its own transformation through a small convolutional kernel (see \S \ref{sec:Experiment_Details}). The inclusion of this model ensures that the benefits that we see from the FERNN are not simply due to having an increased hidden state size and an extra max-pooling non-linearity -- instead they must be coming from the flow on the hidden state itself. The 3D-CNN is a 5-layer network composed of 3D convolutions that performs striding and pooling to eventually yield a 64-dimensional feature vector which is fed to a linear classification head.

\paragraph{Action Recognition Results}
We see in Figure \ref{fig:kth_acc} that the FERNN models again outperform non-flow-equivariant models on generalization to classifying actions undergoing new flows at test time (plots highlighted in orange). We note that the drop in performance is likely due to the $V$ truncation discussed in \S \ref{sec:truncation} breaking exact equivariance and causing overfitting. In Table \ref{tab:kth}, we see that when trained and tested on strongly moving data, $V_{train} = V_{test} = V^T_2$, the FERNN significantly outperforms the non-flow-equivariant counterparts with no train-test disparity.

\section{Discussion}
\label{sec:discussion}
In this work, we have introduced flow equivariance, a general notion of equivariance with respect to time-parameterized group transformations for sequence models, and provided an initial framework for constructing sequence models which satisfy this definition. 

\looseness=-1
\paragraph{Limitations \& Future Work.}
\label{sec:limitations}
An immediately apparent limitation of FERNNs is the formal restriction to constant `velocity' transformations. In Appendix \S \ref{appendix:kth_variable_velocity} we show that in practice, FERNNs are still able to model variable velocities, and even inter-velocity interactions, through the use of inter-$V$ elements of the flow convolution kernels. A second significant limitation of the proposed FERNN construction is that it relies on the shift representation of the flow action on the $\nu$ dimension. This implies that the number of activations, compute, and memory must linearly increase with the size of $V$ (see \S \ref{appendix:comp_complex}), and additionally induces the truncation errors described in \S \ref{sec:truncation}. Future work on an analog to Steerable CNNs \citep{cohen2016steerablecnns} for flow equivariance would be highly valuable in alleviating this limitation. 
In summary, many of the FERNN's limitations are nearly identical to those of the original Group Equivariant Neural Networks -- namely, scalability and discrete group equivariance. With time, however, researchers have developed methods to ameliorate these limitations, such as through steerability and gauge equivariance \citep{cohen2019gaugeequivariantconvolutionalnetworks}. In doing so, group equivariance has become practically relevant, including being used in highly popularized models \citep{alphafold2}. We therefore suggest that our work may be seen as a first step in the introduction of a new theoretical foundation for dynamic equivariance in sequence models.

\looseness=-1
\paragraph{Related Work.}
While  our definition of flow equivariance is novel, prior work has developed equivariance with respect to space-time transformations such as Lorentz transformations \citep{bogatskiy2020lorentzgroupequivariantneural, Gong_2022}; essentially processing the input as a 4D spacetime block such that the transformation is therein self-contained. The novelty of our work compared to this literature is the fact that our approach applies to causal sequence processing, and is defined for arbitrary sequence models rather than spatio-temporal convolutions only. A number of works consider equivariance in recurrent frameworks, \citep{azari2022equivariantdeepdynamicalmodel, nguyen2023equivariant} but only consider static coordinate transformations, not those which change over time, as our flows (e.g. Figure \ref{fig:static_vs_flow}). Closest to our work, `Equivariant Observer Theory' \citep{mahony2022} has studied how to build hand-crafted models and filters which are indeed equivariant with respect to time-parameterized symmetries; however, this work lacks a notion of training familiar to the geometric deep learning community.  
Notably, several biologically inspired sequence models, including the Topographic VAE \citep{keller2022topographicvaeslearnequivariant} and Neural Wave Machine \citep{pmlr-v202-keller23a}, explicitly `roll' latent representations so that a single-velocity flow is baked into the network and subsequently learned from data. Our formulation suggests that these heuristics may be special cases of Flow Equivariant RNNs and generalizes the principle to any set of continuous flows $V$. 
We refer readers to \S \ref{appendix:related_work} for extended discussion. 

\paragraph{Relation to Traveling Waves in Biological Neural Networks.} Spatiotemporal flows of neural activity, such as traveling waves, have been observed throughout the brain since the earliest recordings \citep{adrian1934, Muller2018}. Related work in the domain of neuroscience has studied how such dynamics can be used to facilitate encoding visual motion \citep{wave_illusion, Chemla372763, Townsend10074, motionwaves, waves_motion_lyle, retinal_wave_pretraining}, physical motion \citep{waves_spinal, waves_movement}, or motion through abstract spaces \citep{keller2024spacetimeperspectivedynamicalcomputation}. Our work can be seen to formalize this previously observed connection between motion and flowing neural dynamics in the language of equivariant neural network theory. Specifically, the study of flow equivariance shows that autonomous traveling-wave-like dynamics (flows) are not only useful for encoding motion, but rather they are \emph{necessary} for the accurate and stable representation of any stimulus undergoing a corresponding flow. In \S \ref{sec:flow_eq_of_seq_models}, we have shown that for any non-trivial recurrent neural network to process a sequence undergoing a flow transformation (such as visual motion) in a structured equivariant manner, the hidden state dynamics \emph{must} have hidden-state dynamics that \emph{autonomously} realize a homomorphic representation of the same flow, rather than being input-driven to emulate it. 
In addition to this theoretical argument, we highlight the relationship between lie-group structured flows (our $\psi_t(\nu))$ and traveling waves in the neuroscience literature. Specifically, it has been found that the lateral recurrent connections in cortex, along which spontaneous traveling waves in visual cortex are believed to propagate \citep{DAVIS2024114707}, preferentially connect neurons along smooth integral curves of lie group transformations with respect to individual neural selectivity \citep{6789660, SERIES20022781, hoffman_contact}. In other words, the connections which are believed to support latent flows are known to connect neurons with receptive fields which are precisely related by the representation of the flow on the input space. Altogether, we believe this combined evidence supports serious consideration of the hypothesis that some forms of observed spatiotemopral neural dynamics may be functioning as a form of time-parameterized symmetry equivariance, and we hope this encourages further empirical investigation in the future.

\subsubsection*{Acknowledgments}
We would like to thank Max Welling for continued guidance and advice, which ultimately led the author to the development of this work. We would also like to thank Maurice Weiler, John Vastola, and Gyu Heo for discussions which improved this work. We would like to thank members of the CRISP Lab at Harvard, and the Kempner Institute for the opportunity to present and improve early versions of the work. Finally, we would like to thank the Kempner Institute for funding, and for the opportunity to independently pursue the development of this work through the author's research fellowship. This work has been made possible in part by a gift from the Chan Zuckerberg Initiative Foundation to establish the Kempner Institute for the Study of Natural and Artificial Intelligence at Harvard University.

{
    \small
    \bibliographystyle{plainnat}
    \bibliography{main}
}

\newpage
\appendix




\part*{Appendix} 
\addcontentsline{toc}{part}{Appendix}

\localtableofcontents                

\newpage

\section{Extended Proofs}
\label{appendix:proofs}
\subsection{Equivariance of Group-Lifting Convolution \& Group-Convolution}
\label{appendix:equivariance_group_conv}
We can verify that group convolutional layers given by Equation \ref{eqn:gconv} are equivariant according to Equation \ref{eqn:equivariance}. While these are well-known facts, we re-derive them here in our own notation for completeness and to lay the groundwork for our later proofs. First, for the lifting convolution, we wish to prove: 
\begin{equation}
[(
\hat{g} \cdot f) \ \hat{\star}_G\ \gW^i](g) = \hat{g} \cdot [f \ \hat{\star}_G\ \gW^i](g) \quad \, \forall\,g,\hat{g}\,\in G,\, f \in \gF_{K}(X). 
\end{equation}

\begin{proof}(Group-Lifting Conv. is Group Equivariant)
  \begin{align}
  [(\hat{g} \cdot f)\ \hat{\star}_G \gW^i](g) & = \sum_{x \in \sZ^2} \sum_{k=1}^K f_k(\hat{g}^{-1} \cdot x) \gW_k^i(g^{-1} \cdot x)  \quad \text{(by def. action, Eqn. \ref{eqn:action})} \\
  & = \sum_{\hat{x} \in \sZ^2} \sum_{k=1}^K f_k(\hat{x}) \gW_k^i(g^{-1} \cdot (\hat{g} \cdot \hat{x}))  \quad \text{(where $\hat{x} = \hat{g}^{-1} \cdot x$)} \label{proof:2}\\
  & = \sum_{\hat{x} \in \sZ^2} \sum_{k=1}^K f_k(\hat{x}) \gW_k^i((g^{-1} \cdot \hat{g}) \cdot \hat{x}))  \quad \text{(by associativity)} \\
  & = \sum_{\hat{x} \in \sZ^2} \sum_{k=1}^K f_k(\hat{x}) \gW_k^i((\hat{g}^{-1} \cdot g)^{-1} \cdot \hat{x}))  \quad \text{(by defn. inverse)} \\
  & = [f\ \hat{\star}_G \gW^i](\hat{g}^{-1} \cdot g) = \hat{g} \cdot [f\ \hat{\star}_G \gW^i](g)\ \quad \text{(by Eqn. \ref{eqn:gconv} \& defn. action)}
  \end{align}
\end{proof}
In line \ref{proof:2}, we use the fact that because the group $G$ acts on $X$ by a bijection (formally, the input space representation of the group is $\pi_X: G \rightarrow  \text{Aut}(X)$), the substitution $\hat{x} = g^{-1} \cdot x$ is just a relabeling of the index set, and hence the sum over $x \in X$ is the same as the sum over $\hat{x} \in X$.

Next for the group convolution, the proof is virtually identical. We wish to prove:
\begin{equation}
[(
\hat{g} \cdot f)  \star_G\ \gW^i](g) = \hat{g} \cdot [
f  \star_G\ \gW^i](g) \quad \, \forall\,g,\hat{g}\,\in G,\, f \in \gF_{K}(X). 
\end{equation}
\begin{proof}(Group-Conv. is Group Equivariant)
  \begin{align}
  [(\hat{g} \cdot f)\ \star_G \gW^i](g) & = \sum_{h \in G} \sum_{k=1}^K f_k(\hat{g}^{-1} \cdot h) \gW_k^i(g^{-1} \cdot h)  \quad \text{(by def. action, Eqn. \ref{eqn:action})} \\
  & = \sum_{\hat{h} \in G} \sum_{k=1}^K f_k(\hat{h}) \gW_k^i(g^{-1} \cdot (\hat{g} \cdot \hat{h}))  \quad \text{(where $\hat{h} = \hat{g}^{-1} \cdot h$)}\\
  & = \sum_{\hat{h} \in G} \sum_{k=1}^K f_k(\hat{h}) \gW_k^i((g^{-1} \cdot \hat{g}) \cdot \hat{h}))  \quad \text{(by associativity)} \\
  & = \sum_{\hat{h} \in G} \sum_{k=1}^K f_k(\hat{h}) \gW_k^i((\hat{g}^{-1} \cdot g)^{-1} \cdot \hat{h}))  \quad \text{(by defn. inverse)} \\
  & = [f\ \hat{\star}_G \gW^i](\hat{g}^{-1} \cdot g) = \hat{g} \cdot [f\ \hat{\star}_G \gW^i](g)\ \quad \text{(by Eqn. \ref{eqn:gconv}  \& defn. action)}
  \end{align}
\end{proof}
Again, we use the fact that the group $G$ is closed under the group action, so the substitution $\hat{h} = \hat{g}^{-1} \cdot h$ is just a relabeling of the index set, and hence the sum over $h \in G$ is the same as the sum over $\hat{h} \in G$.

\subsection{Equivariance of Group Equivariant RNN}
\label{sec:grnn_proof}
We wish to prove that a recurrent neural network built with group-equivariant convolutional layers (Equation \ref{eqn:grnn}) is equivariant according to Equation \ref{eqn:equivariance}. If we write the hidden state as a function of the input signal $h_{t+1}[f_{\leq t}](g) \in \gF_{K'}(Y)$, where $Y=G$, then we can write the the equivariance condition as: 
\begin{equation}
    \label{eqn:rnn_equivariance} 
     h_{t+1}[\hat{g} \cdot f_{\leq t}](g) = \hat{g} \cdot h_{t+1}[f_{\leq t}](g) \, \quad \forall\, t \in \sZ_+,\  f \in \gF_K(X),\  \hat{g}, g \in G,
\end{equation}
 where $\hat{g} \cdot f_{\leq t} := \{\hat{g} \cdot f_i\ |\ i \in \sZ_+ \leq t \}$, denotes the input signal generated by applying the same group element to each timestep. We prove this by induction for all $t \in \sZ_+$:

\begin{proof}(G-RNN is Group Equivariant)

We assume (i) $\hat{\star}_G$ and $\star_G$ are the group-lifting convolution and group convolution defined in Equation \ref{eqn:gconv}, (ii) $\sigma$ is a G-equivariant non-linearity (e.g. pointwise), and (iii) $g$ acts linearly on $h$-space. Since $h$ is defined by the lifting and recurrent convolutions, we see that $Y = G$. We further assume (iv) $h_0$ is initialized to be invariant to the group action, i.e. $\hat{g} \cdot h_0(g) = h_0(\hat{g}^{-1} \cdot g) = h_0(g)$, and (v) the input signal is zero before time zero, i.e. $f_{<0} = \mathbf{0}$. 

\underline{Base Case:} We can see that the base case is trivially true from the initial condition, since the initial condition is not a function of the input sequence, so:
\begin{align}
h_{0}[\hat{g} \cdot f_{<0}](g)  & = h_{0}[f_{<0}](g) \quad \text{(since $h_0$ is indep. of input $f_{<0}$)} \\
& = h_{0}[f_{<0}](\hat{g}^{-1} \cdot g) \quad \text{(by initialization)} \\
& = \hat{g} \cdot h_{0}[f_{<0}](g) \quad \text{(by defn. action)} 
\end{align}

\underline{Inductive Step:} Assuming $h_t[\hat{g} \cdot f_{<t}](g) = \hat{g} \cdot h_t[f_{<t}](g)$, we wish to prove this holds also for $t+1$:
\begin{align}
    h_{t+1}[\hat{g} \cdot f_{\leq t}](g) & = \sigma\bigl(\left[h_t[\hat{g} \cdot f_{<t}] \star_G \mathcal{W}\right](g) + \left[[\hat{g} \cdot f_t] \hat{\star}_G \mathcal{U}\right](g)\bigr) \quad \text{(by defn. G-RNN)}\\
    & = \sigma\bigl(\left[[\hat{g} \cdot h_t[f_{<t}]] \star_G \mathcal{W}\right](g) + \left[[\hat{g} \cdot f_t] \hat{\star}_G \mathcal{U}\right](g)\bigr) \quad \text{(by inductive hyp.)}\\
    & = \sigma\bigl(\hat{g} \cdot [h_t[f_{<t}] \star_G \mathcal{W}](g) + \hat{g} \cdot [f_t \hat{\star}_G \mathcal{U}](g)\bigr) \quad \text{(by equivar. of G-conv.)}\\
    & = \hat{g} \cdot \sigma\bigl(\left[h_t[f_{<t}] \star_G \mathcal{W}\right](g) + \left[f_t \hat{\star}_G \mathcal{U}\right](g)\bigr) \quad \text{(by equivar. non-lin.)}\\
    & = \hat{g} \cdot h_{t+1}[f_{\leq t}](g) \quad \text{(by defn. G-RNN, Eqn \ref{eqn:grnn})}
\end{align}
\end{proof}
We note that the step on the fourth line assumes that the group action is linear, in that it distributes over the addition operation between the previous hidden state and the input. Traditionally, group equivariant neural network architectures have been constructed to exhibit linear representations in the latent space, and therefore this is a natural assumption. However, if one allowed $g$ to act by non‑linear maps (i.e. dropped the linear‑representation assumption), then neither distributivity nor $\sigma$-equivariance would hold in general.

\subsection{Frame-wise Flow Equivariance}
\label{sec:fwcnn_equivariance_proof}
We wish to prove that any G-equivariant map $\phi: \gF_K(X) \rightarrow \gF_{K'}(Y)$, that is applied `frame-wise' to a space-time function $f \in \gF_{K}(X, \sZ)$ is also flow equivariant according to Definition \ref{def:flow_equivariance}. Specifically, let $\Phi[f] = \left[\phi(f_0), \phi(f_1) \ldots \phi(f_T) \right]$ be a sequence model built by concatenating the output of a G-equivariant map $\phi$ applied to each timestep $t$ of the input signal $f_t$. Furthermore, let $\phi$ be equivariant to the individual group elements generated by vector fields $\nu \in V$, i.e.  $\phi(\psi_t(\nu) \cdot f_t) = \psi_t(\nu) \cdot \phi(f_t) \ \forall t \in \sZ,\  \nu \in V, f_t \in \gF_K(X)$. Then:
\begin{proof}(Frame-wise Equivariant Maps are Flow Equivariant)
\begin{align}
\Phi(\psi(\nu) \cdot f) & = \left[\phi(\psi_0(\nu) \cdot f_0), \phi(\psi_1(\nu) \cdot f_1) \ldots \phi(\psi_T(\nu) \cdot f_T) \right] \quad (\text{by defn. flow action}) \\
& = \left[\psi_0(\nu) \cdot \phi(f_0), \psi_1(\nu) \cdot \phi(f_1) \ldots \psi_T(\nu) \cdot \phi(f_T) \right] \quad (\text{by G-equivariance of } \phi)  \\
& = \psi(\nu) \cdot \left[\phi(f_0), \phi(f_1) \ldots \phi(f_T) \right] \quad (\text{by defn. flow action, eqn. \ref{eqn:seq_action}}) \\
& = \left(\psi(\nu) \cdot \Phi(f)\right) \quad \text{(by defn. $\Phi$)}
\end{align}

\end{proof}

\subsection{Group Equivariant RNNs are not Flow Equivariant}
\label{sec:GRNN_not_flow_eq}
In this subsection we prove Theorem \ref{thm:grnn_not_flow_eq}, that the group-equivariant RNN as defined in Equation \ref{eqn:grnn}, with non-zero $\gW$, is not flow equivariant according to Definition \ref{def:flow_equivariance}, except in the degenerate flow invariant case. 

We will prove this in two parts: (i) First we will prove that a G-RNN with constant kernels is indeed flow invariant through induction (the degenerate case). Then (ii), we will proceed with a proof by contradiction to show that this is the only such flow equivariant G-RNN possible. 

To begin, we recall that the degenerate flow invariant case is defined as a G-RNN where both $\gW$ and $\gU$ are constant over G. We will denote such kernels $\bar{\gU}$ and $\bar{\gW}$, such that: 
\begin{equation}
    \label{eqn:flow_invariant_kernels}
    \bar{\gW}(g) = \bar{\gW}(g'), \quad  \forall\, g, g' \in G\quad \& \quad  \bar{\gU}(g \cdot x) = \bar{\gU}(g' \cdot x) \quad  \forall\, g, g' \in G, x \in X. 
\end{equation}
We wish to prove that such a network satisfies Definition \ref{def:flow_equivariance}, but more specifically that it is actually invariant to the flow action, meaning: 
\begin{equation}
    \label{eqn:flow_invariance}
    h_t[\psi(\nu) \cdot f_{<t}]  = h_t[f_{<t}] \quad \forall \, f \in \gF_K(X),\ \nu \in V, \ t \in \sZ_+
\end{equation}
We can see that this is a special case of Definition \ref{def:flow_equivariance} where the group action on the output space is trivial, i.e. given by the identity: $\psi_t(\nu) \cdot h_t[f_{<t}] = h_t[f_{<t}]$. We can prove this by induction as before. First we introduce the following lemma for convenience:

\begin{lemma}(Lifting convolution with $\bar{\gU}$ is group invariant)
    \label{lemma:flow_invariant}
    The result of applying the lifting convolution $(\hat{\star}_G)$ defined in \ref{eqn:gconv}  with a constant kernel $\bar{\gU}$ to a a signal $f \in \gF_K(X)$ is invariant to flow element action on $f$, i.e. 
    \begin{equation}
        [(\psi_t(\nu) \cdot f) \ \hat{\star}_G\ \bar{\gU}^i](g) = [f \ \hat{\star}_G\ \bar{\gU}^i](g) \quad \, \forall\,\nu\in V,\, f \in \gF_{K}(X). 
    \end{equation}
\end{lemma}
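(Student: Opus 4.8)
The plan is to treat $\psi_t(\nu)$, for fixed $t$ and $\nu$, as a single group element $\psi \in G$, so that its action on the static signal $f \in \gF_K(X)$ is just the ordinary left action $(\psi \cdot f)(x) = f(\psi^{-1}\cdot x)$ of Eqn.~\ref{eqn:action}. The computation then mirrors the equivariance proof of the lifting convolution in \S\ref{appendix:equivariance_group_conv}, the difference being that the constancy of $\bar\gU$ along orbits absorbs the residual group factor instead of producing a shift in the output argument $g$.

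First I would expand the left-hand side using the definitions of the action and of the lifting convolution (Eqn.~\ref{eqn:gconv}):
\begin{equation}
[(\psi \cdot f)\ \hat{\star}_G\ \bar\gU^i](g) = \sum_{x \in X}\sum_{k=1}^K f_k(\psi^{-1}\cdot x)\,\bar\gU^i_k(g^{-1}\cdot x).
\end{equation}
Next I would substitute $\hat{x} = \psi^{-1}\cdot x$; since $\pi_X(\psi)$ is a bijection of $X$, this is merely a relabeling of the index set (exactly as justified after Eqn.~\ref{proof:2}), so the sum over $x$ equals the sum over $\hat{x}$. Writing $x = \psi\cdot\hat{x}$ and using associativity, the kernel argument becomes $(g^{-1}\psi)\cdot\hat{x}$ while $f_k$ is evaluated simply at $\hat{x}$.

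The key step is to invoke the constant-kernel hypothesis of Eqn.~\ref{eqn:flow_invariant_kernels}, namely $\bar\gU(a\cdot x) = \bar\gU(b\cdot x)$ for all $a,b\in G$ and $x\in X$. Taking $a = g^{-1}\psi$ and $b = g^{-1}$ gives $\bar\gU^i_k((g^{-1}\psi)\cdot\hat{x}) = \bar\gU^i_k(g^{-1}\cdot\hat{x})$, which removes all dependence on $\psi$. What is left is precisely $[f\ \hat{\star}_G\ \bar\gU^i](g)$, establishing the claimed invariance.

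I do not expect a serious obstacle here: the only points needing care are confirming that the change of variables is a genuine relabeling (which rests on $\pi_X(\psi)\in\mathrm{Aut}(X)$ being a bijection) and recognizing conceptually that the constant-kernel property is what collapses the group shift to the identity. This collapse is exactly why the degenerate case is flow \emph{invariant} rather than merely flow equivariant, and it is the ingredient that will subsequently let the induction in \S\ref{sec:GRNN_not_flow_eq} carry through.
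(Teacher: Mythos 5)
Your proposal is correct and follows essentially the same route as the paper's own proof: expand the action, relabel the sum via the bijective substitution $\hat{x} = \psi_t(\nu)^{-1}\cdot x$, and use the constancy of $\bar{\gU}$ along orbits (Eqn.~\ref{eqn:flow_invariant_kernels}) to erase the residual factor $\psi_t(\nu)$. The only cosmetic difference is that the paper routes the final step through closure, naming $\hat{g} = \psi_t(\nu)^{-1}\cdot g \in G$ before invoking constancy, whereas you apply the constant-kernel hypothesis directly with $a = g^{-1}\psi$ and $b = g^{-1}$ — an equivalent and equally valid use of the same hypothesis.
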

\begin{proof}(Lemma \ref{lemma:flow_invariant})
\begin{align}
  [(\psi_t(\nu) \cdot f)\ \hat{\star}_G \bar{\gU}^i](g) & = \sum_{x \in \sZ^2} \sum_{k=1}^K f_k(\psi_t(\nu)^{-1} \cdot x) \bar{\gU}_k^i(g^{-1} \cdot x)  \quad \text{(by def. action, Eqn. \ref{eqn:action})} \\
  & = \sum_{\hat{x} \in \sZ^2} \sum_{k=1}^K f_k(\hat{x}) \bar{\gU}_k^i(g^{-1} \cdot (\psi_t(\nu) \cdot \hat{x}))  \quad \text{(where $\hat{x} = \psi(\nu)^{-1} \cdot x$)} \\
  & = \sum_{\hat{x} \in \sZ^2} \sum_{k=1}^K f_k(\hat{x}) \bar{\gU}_k^i((\psi_t({\nu})^{-1} \cdot g)^{-1} \cdot \hat{x}))  \quad \text{(by associativity and inv.)} \\
  & = \sum_{\hat{x} \in \sZ^2} \sum_{k=1}^K f_k(\hat{x}) \bar{\gU}_k^i(\hat{g}^{-1} \cdot \hat{x}))  \quad \text{(by closure, $\hat{g} = \psi_t({\nu})^{-1} \cdot g \in G$)} \\
  & = \sum_{\hat{x} \in \sZ^2} \sum_{k=1}^K f_k(\hat{x}) \bar{\gU}_k^i(g^{-1} \cdot \hat{x}))  \quad \text{(by defn $\bar{\gU}(g^{-1} \cdot \hat{x}) = \bar{\gU}(\hat{g}^{-1} \cdot \hat{x})$)} \\
  & = [f\ \hat{\star}_G \bar{\gU}^i](g) \quad \text{(by Eqn. \ref{eqn:gconv})}
  \end{align}
\end{proof}
We have again used the fact that the group acts on $X$ by a bijection as in \S \ref{appendix:equivariance_group_conv}. We can then proceed with the proof by induction to prove that the G-RNN with constant kernels is flow invariant:
\begin{proof}(G-RNN with Constant Kernels is Flow Invariant)

    Let $h_t[f_{<t}]$ be defined as in Equation \ref{eqn:grnn} with $\gU = \bar{\gU}$ and $\gW = \bar{\gW}$. We again assume (i) $\hat{\star}_G$ and $\star_G$ are the group-lifting convolution and group convolution defined in Equation \ref{eqn:gconv}, (ii) $\sigma$ is a G-equivariant non-linearity (e.g. pointwise), (iii) $g$ acts linearly on $h$-space, (iv) $h_0$ is initialized to be invariant to the group action of the individual flow elements, i.e. $\psi_t(\nu) \cdot h_0(g) = h_0(\psi_t(\nu)^{-1} \cdot g) = h_0(g)$, and (v) the input signal is zero before time zero, i.e. $f_{<0} = \mathbf{0}$.

    \underline{Base Case:} We can see that the base case is again trivially true from the initial condition, since the initial condition is not a function of the input sequence, so:
    \begin{equation}        
    h_{0}[\psi(\nu) \cdot f_{<0}](g) = h_{0}[f_{<0}](g) \quad \text{(since $h_0$ is indep. of input $f_{<0}$)} 
    \end{equation}

    \underline{Inductive Step:} Assuming the inductive hypothesis that $h_t[\psi(\nu) \cdot f_{<t}] = h_t[f_{<t}]$, we wish to prove this holds for $t+1$:
    \begin{align}
    \label{eqn:grnn_equivariance}
    h_{t+1}[\psi(\nu) \cdot f_{\leq t}](g) & = \sigma\bigl(\left[h_t[\psi(\nu) \cdot f_{<t}] \star_G \bar{\mathcal{W}}\right](g) + \left[[\psi_t(\nu) \cdot f_t] \hat{\star}_G \bar{\mathcal{U}}\right](g)\bigr) \quad \text{(by defn. G-RNN)}\\ 
    & = \sigma\bigl(\left[h_t[f_{<t}]\star_G \bar{\mathcal{W}}\right](g) + \left[[\psi_t(\nu) \cdot f_t] \hat{\star}_G \bar{\mathcal{U}}\right](g)\bigr) \quad \text{(by inductive hyp.)}\\
    & = \sigma\bigl([h_t[f_{<t}] \star_G \bar{\mathcal{W}}](g) + [f_t \hat{\star}_G \bar{\mathcal{U}}](g)\bigr) \quad \text{(by Lemma \ref{lemma:flow_invariant})}\\
    & = h_{t+1}[f_{\leq t}](g) \quad \text{(by eqn. \ref{eqn:grnn})}
    \end{align}
\end{proof}

Finally then, we use a proof by contradiction to show that this degenerate flow invariant case is the only possible case for the G-RNN to be flow equivariant.

\begin{proof}(Theorem \ref{thm:grnn_not_flow_eq}, G-RNNs are not Generally Flow Equivariant) 

Assert the converse, that the G-RNN defined in Equation \ref{eqn:grnn} is flow equivariant, and that the kernels $\gW$ and $\gU$, are not constant, i.e. $\gW \neq \bar{\gW}$ \& $\gU \neq \bar{\gU}$ (as defined in Equation \ref{eqn:flow_invariant_kernels}). Then, it should be the case that:
\begin{equation}
    h_{t+1}[\psi(\nu) \cdot f_{\leq t}](g) = (\psi(\nu) \cdot h[f_{\leq t}])_{t+1}(g) = h_{t+1}[f_{\leq t}](\psi_{t}(\nu)^{-1} \cdot g)
\end{equation} 

For $t=0$ we have the following:
\begin{align}
h_{1}[\psi(\nu) \cdot f_{\leq 0}](g) & = \sigma\bigl(\left[h_0[\psi(\nu) \cdot f_{<0}] \star_G \mathcal{W}\right](g) + \left[[\psi_0(\nu) \cdot f_0] \hat{\star}_G \mathcal{U}\right](g)\bigr) \\
& = \sigma\bigl(\left[h_0[f_{<0}] \star_G \mathcal{W}\right](g) + \left[f_0 \hat{\star}_G \mathcal{U}\right](g)\bigr) \quad \text{(by defn. $h_0$ \& $\psi_0$)} \\
& = h_{1}[f_{\leq 0}](g)
\end{align}

For the next step, we get:
\begin{align}
h_{2}[\psi(\nu) \cdot f_{\leq 1}](g) & = \sigma\bigl(\left[h_1[\psi(\nu) \cdot f_{\leq0}] \star_G \mathcal{W}\right](g) + \left[[\psi_1(\nu) \cdot f_1] \hat{\star}_G \mathcal{U}\right](g)\bigr) \\
& = \sigma\bigl(\left[h_1[f_{\leq0}] \star_G \mathcal{W}\right](g) + \psi_1(\nu) \cdot\left[f_1 \hat{\star}_G \mathcal{U}\right](g)\bigr) \quad \text{(by above \& g-conv.)}
\end{align}

If we look at the action of the flow on the output space, according to flow equivariance, we should have:
\begin{align}
(\psi(\nu) \cdot h[f_{\leq 1}])_2(g) & = \sigma\bigl(\psi_1(\nu) \cdot \left[h_1[f_{\leq0}] \star_G \mathcal{W}\right](g) + \left[[\psi_1(\nu) \cdot f_1] \hat{\star}_G \mathcal{U}\right](g)\bigr) \\
& = \sigma\bigl(\psi_1(\nu) \cdot \left[h_1[f_{\leq0}] \star_G \mathcal{W}\right](g) + \psi_1(\nu) \cdot \left[f_1 \hat{\star}_G \mathcal{U}\right](g)\bigr) \quad \text{(by g-conv.)}
\end{align}

Setting $h_{2}[\psi(\nu) \cdot f_{\leq 1}](g) = (\psi(\nu) \cdot h[f_{\leq 1}])_2(g)$, as per our assumption, we see the input terms match exactly, but the $h_{t-1}$ terms imply the following equivalence:
\begin{align}
 \left[h_1[f_{\leq0}] \star_G \mathcal{W}\right](g) & = \psi_1(\nu) \cdot \left[h_1[f_{\leq0}] \star_G \mathcal{W}\right](g) \\  
 & = \left[h_1[f_{\leq0}] \star_G \mathcal{W}\right](\psi_1(\nu)^{-1} \cdot g) \quad \text{(by action defn.)}
\end{align}
We see this is precisely the `lagging' hidden state that we visualized in Figure \ref{fig:counter_example}. In order for this equality to hold for all $\nu \in V$, the output of the convolution must be constant along the flows generated by all $\nu$. We can see that this is only satisfied by $\gW = \bar{\gW}$, a contradiction.

\end{proof}

\subsection{Equivariance of Lifting Flow Convolution}
\label{sec:lifting_flow_conv_equivariance}
In this section we verify that the flow-lifting convolution given by Equation \ref{eqn:flow_input_lift} is equivariant to action of the individual flow elements, with the following representation of the action: 
\begin{equation}
     \psi_t(\nu) \cdot [f_t \ \hat{\star}_{V\times G}\ \gU^i](\nu, g) = [(\psi_t(\nu) \cdot f_t) \ \hat{\star}_{V\times G}\ \gU^i](\nu, g) := [f_t \ \hat{\star}_{V\times G}\ \gU^i](\nu, \psi_t(\nu)^{-1} \cdot g) 
\end{equation}
Since the flow lifting convolution is a trivial lift, equivalent to the group-lifting convolution with an extra duplicated index, this proof is a trivial replication of the group-lifting convolution proof:
\begin{proof}(Flow-Lifting Conv. is Flow Equivariant)
    \begin{align}
    [(\psi_t(\nu) \cdot f_t) \ \hat{\star}_{V\times G}\ \gU^i](\nu, g) & = \sum_{x \in X} \sum_{k=1}^K f_k(\psi_t(\nu)^{-1} \cdot x) \gU_k^i(g^{-1} \cdot x) \quad \text{(by defn. action)} \\
    & = \sum_{\hat{x} \in X} \sum_{k=1}^K f_k(\hat{x}) \gU_k^i(g^{-1} \cdot (\psi_t(\nu) \cdot \hat{x})) \quad \text{(where $\hat{x} = \psi_t(\nu)^{-1} \cdot x$)} \\
    & = \sum_{\hat{x} \in X} \sum_{k=1}^K f_k(\hat{x}) \gU_k^i((\psi_t(\nu)^{-1} \cdot g)^{-1} \cdot \hat{x})) \quad \text{(by associativity \& inv.)} \\ 
    & = [f_t \ \hat{\star}_{V\times G}\ \gU^i](\nu, (\psi_t(\nu)^{-1} \cdot g))
\end{align}
\end{proof}

\subsection{Equivariance of Flow Convolution}
\label{sec:flow_conv_equivariance}
In this section, we prove that the flow convolution in Equation \ref{eqn:flow_conv} is equivariant to the action of the individual flow elements yielding: 
\begin{equation}
    \bigl[(\psi_t(\nu) \cdot h) \star_{V \times G} \gW^i\bigr](\nu, g) = \psi_t(\nu) \cdot \bigl[h \star_{V \times G} \gW^i\bigr](\nu, g)
\end{equation}

\begin{proof}(Flow Conv. is Flow Equivariant)
\begin{align}
\bigl[(\psi_t(\nu) \cdot h) \star_{V \times G} \gW^i\bigr](\nu, g)
   & \;=\;
   \sum_{\gamma \in V}\sum_{m \in G} \sum_{k=1}^{K'}h_k(\gamma, \psi_t(\nu)^{-1} \cdot m)\,
              \gW^i_k\!\bigl(\gamma-\nu, g^{-1} \cdot  m\bigr) \\
    \text{(where $\hat{m} = \psi_t(\nu)^{-1} \cdot m$)}  \quad &  \;=\;
   \sum_{\gamma \in V}\sum_{\hat{m} \in G} \sum_{k=1}^{K'}h_k(\gamma, \hat{m})\,
              \gW^i_k\!\bigl(\gamma-\nu, g^{-1} \cdot  (\psi_t(\nu) \cdot \hat{m})\bigr)  \\ 
     \text{(by associativity \& inverse)} \quad &  \;=\;
   \sum_{\gamma \in V}\sum_{\hat{m} \in G} \sum_{k=1}^{K'}h_k(\gamma, \hat{m})\,
              \gW^i_k\!\bigl(\gamma-\nu, (\psi_t(\nu)^{-1} \cdot g)^{-1} \cdot  \hat{m}\bigr) \\
  \text{(by Eqn. \ref{eqn:flow_conv})} \quad & =  \bigl[h \star_{V \times G} \gW^i\bigr](\nu, \psi_t(\nu)^{-1} \cdot g)  \\
   \text{(by action)} \quad & = \psi_t(\nu) \cdot \bigl[h \star_{V \times G} \gW^i\bigr](\nu, g) 
\end{align}
\end{proof}

\subsection{Equivariance of FERNN}
\label{appendix:fernn_eq_proof}

In this section, we prove Theorem \ref{thm:fernn_flow_equivariance} by induction. We restate the theorem below:

\begin{reptheorem}(FERNNs are flow equivariant)
Let $h[f] \in \gF_{K'}(Y, \sZ)$ be a FERNN as defined in Equations \ref{eqn:flow_input_lift}, \ref{eqn:flow_conv}, and \ref{eqn:fernn}, with hidden-state initialization invariant to the group action and constant in the flow dimension, i.e. $h_0(\nu, g)  = h_0(\nu',g)\ \forall\, \nu', \nu \in V$ and $\psi_1(\nu) \cdot h_0(\nu, g) = h_0(\nu, g) \ \forall \,\nu \in V,\,g\in G$. 
Then, $h[f]$ is flow equivariant according to Definition \ref{def:flow_equivariance} with the following representation of the action of the flow in the output space for $t \geq 1$:
\begin{equation}
     (\psi(\hat{\nu}) \cdot h[f])_t(\nu, g) = h_t[f](\nu - \hat{\nu}, \psi_{t-1}(\hat{\nu})^{-1} \cdot g)
\end{equation}
\end{reptheorem}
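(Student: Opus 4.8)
The plan is to establish the equivariance identity $h_t[\psi(\hat{\nu}) \cdot f](\nu, g) = h_t[f](\nu - \hat{\nu}, \psi_{t-1}(\hat{\nu})^{-1} \cdot g)$ by induction on $t$, working with the equivalent left-action form $h_t[\psi(\hat{\nu}) \cdot f](\nu, g) = \psi_{t-1}(\hat{\nu}) \cdot h_t[f](\nu - \hat{\nu}, g)$ and writing $H = V \times G$ throughout. For the base case at $t = 0$, I would note that $h_0$ is independent of the input, so the left side is simply $h_0(\nu, g)$, while the right side is $\psi_{-1}(\hat{\nu}) \cdot h_0(\nu - \hat{\nu}, g)$. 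The constancy of $h_0$ in the flow index collapses $h_0(\nu - \hat{\nu}, g)$ to $h_0(\nu, g)$, and the fixed-point assumption $\psi_1(\nu) \cdot h_0 = h_0$ (hence also $\psi_{-1}(\hat{\nu}) \cdot h_0 = h_0$) removes the residual $G$-action, closing the base case.

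For the inductive step, I would substitute the transformed input into the FERNN recurrence, Equation \ref{eqn:fernn_on_flowing_input}, and treat the two summands separately. In the recurrent term I would apply the inductive hypothesis to rewrite $h_t[\psi(\hat{\nu}) \cdot f]$ as a copy of $h_t[f]$ carrying both a $G$-flow $\psi_{t-1}(\hat{\nu})$ and a $V$-shift by $\hat{\nu}$; the $G$-flow then passes through the flow convolution via its equivariance, Equation \ref{eqn:flow_conv_eqivariance}, while the $V$-shift passes through via the reindexing $\gamma \mapsto \gamma - \hat{\nu}$ in the definition of $\star_{V \times G}$, together producing $\psi_{t-1}(\hat{\nu}) \cdot [h_t[f] \star_H \mathcal{W}](\nu - \hat{\nu}, g)$. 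In the input term I would use the trivial-lift equivariance of the flow-lifting convolution, Equation \ref{eqn:flow_input_lift_equivariance}, and the fact that the lifted input is constant in $\nu$, so that a free relabeling $\nu \mapsto \nu - \hat{\nu}$ installs the matching $V$-shift and yields $\psi_t(\hat{\nu}) \cdot [f_t\ \hat{\star}_H\ \mathcal{U}](\nu - \hat{\nu}, g)$. This reproduces Equation \ref{eqn:fernn_flowing_input_substituted}, where the recurrent summand carries a prefactor $\psi_1(\nu) \cdot \psi_{t-1}(\hat{\nu})$ and the input summand carries $\psi_t(\hat{\nu})$.

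The decisive step is the algebraic regrouping $\psi_1(\nu) \cdot \psi_{t-1}(\hat{\nu}) = \psi_t(\hat{\nu}) \cdot \psi_1(\nu - \hat{\nu})$, which I would derive from the one-parameter composition law $\psi_{t-1}(\hat{\nu}) = \psi_{-1}(\hat{\nu}) \cdot \psi_t(\hat{\nu})$ together with the abelian assumption on $V$, under which $\psi_1(\nu) \cdot \psi_{-1}(\hat{\nu}) = \psi_1(\nu - \hat{\nu})$ and all these elements commute past $\psi_t(\hat{\nu})$. Once this identity is inserted, both summands share a common leading factor $\psi_t(\hat{\nu})$; since the flow element acts linearly on the $G$-index and $\sigma$ is a pointwise nonlinearity, this factor can be pulled outside $\sigma$. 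What remains inside is exactly the FERNN update of Equation \ref{eqn:fernn} evaluated at the shifted index $\nu - \hat{\nu}$, namely $h_{t+1}[f](\nu - \hat{\nu}, g)$, and the external $\psi_t(\hat{\nu})$ rewrites as the $G$-action $g \mapsto \psi_t(\hat{\nu})^{-1} \cdot g$, which is precisely the claimed representation at time $t+1$.

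I expect the main obstacle to be the careful bookkeeping of the two simultaneous actions — the shift in $V$ and the flow in $G$ — and confirming that each passes cleanly through both the flow convolution and the lifting convolution without interference. The single genuinely load-bearing identity is the abelian regrouping $\psi_1(\nu) \cdot \psi_{t-1}(\hat{\nu}) = \psi_t(\hat{\nu}) \cdot \psi_1(\nu - \hat{\nu})$: this is exactly where the coupling between the $V$-shift and the $G$-flow is forged, and it is the one point at which the hypothesis that $V$ is abelian is indispensable (with the general Baker–Campbell–Hausdorff product required otherwise).
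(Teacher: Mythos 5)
Your proof is correct and follows essentially the same route as the paper's: induction on $t$ with the constant, flow-invariant initialization closing the base case, the trivial-lift and flow-convolution equivariances handling the two summands of the recurrence, and the abelian regrouping $\psi_1(\nu)\cdot\psi_{t-1}(\hat{\nu}) = \psi_t(\hat{\nu})\cdot\psi_1(\nu-\hat{\nu})$ as the decisive step before pulling $\psi_t(\hat{\nu})$ through the pointwise nonlinearity. You have correctly identified the load-bearing identity and where the abelian hypothesis on $V$ enters, exactly as in the paper's appendix proof.
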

We note for the sake of completeness, that this then implies the following equivariance relations:
\begin{equation}
     h_t[\psi(\hat{\nu}) \cdot f](\nu, g) = h_t[f](\nu - \hat{\nu}, \psi_{t-1}(\hat{\nu})^{-1} \cdot g) = \psi_{t-1}(\hat{\nu}) \cdot h_t[f](\nu - \hat{\nu}, g)
\end{equation}

\begin{proof}(Theorem \ref{thm:fernn_flow_equivariance}, FERNNs are flow equivariant)

Identical to the G-RNN, we assume that $\sigma$ is a G-equivariant non-linearity, $g$ acts linearly on $h$-space, $h_0(\nu, g)$ is defined constant as above, and the input signal is zero before time zero, i.e. $f_{<0} = \mathbf{0}$. 

\underline{Base Case:} The base case is trivially true from the initial condition:
\begin{align}
    h_0[\psi(\hat{\nu}) \cdot f_{<0}](\nu, g) & =  h_0[ f_{<0}](\nu, g) \quad \text{(by initial cond.)}\\
    & = h_0[f_{<0}](\nu - \hat{\nu}, \psi_{-1}(\hat{\nu})^{-1} \cdot g) \quad \text{(by constant init.)}
\end{align}

\underline{Inductive Step:} Assuming $ h_t[\psi(\hat{\nu}) \cdot f](\nu, g) = h_t[f](\nu - \hat{\nu}, \psi_{t-1}(\hat{\nu})^{-1} \cdot g)\, \forall\, \nu \in V, g \in G$, for some $t \geq 0$, we wish to prove this also holds for $t+1$:

Using the FERNN recurrence (Eqn. \ref{eqn:fernn}) on the transformed input, and letting $H = V \times G$, we get:

\begin{align}
h_{t+1}[\psi(\hat\nu)\!\cdot\! f](\nu,g)
&=\sigma\!\Bigl(
      \psi_{1}(\nu)\!\cdot\![\,h_{t}[\psi(\hat\nu)\!\cdot\! f_{<t}] \star_{H} W\,](\nu,g)
      \;+\;
      [\,(\psi_{t}(\hat\nu)\!\cdot\! f_t)\hat{\star}_{H} U\,](\nu,g)
    \Bigr)\, 
\\[2mm]
\text{(by inductive hyp.)} \quad  & =\sigma\!\Bigl(
      \psi_{1}(\nu)\!\cdot\!
      \bigl[(\psi_{t-1}(\hat\nu)\!\cdot\!h_t[f_{<t}]) \star_{H} W\bigr]\!(\nu-\hat\nu,g)
      \;+\;
      \bigl[(\psi_{t}(\hat\nu)\!\cdot\!f_t)\hat{\star}_{H} U\bigr]\!(\nu,g)
    \Bigr)
\\[2mm]
\text{(by trivial inp. lift)} \quad  & =\sigma\!\Bigl(
      \psi_{1}(\nu)\!\cdot\!
      \bigl[(\psi_{t-1}(\hat\nu)\!\cdot\!h_t[f_{<t}]) \star_{H} W\bigr]\!(\nu-\hat\nu,g)
      \;+\;
      \bigl[(\psi_{t}(\hat\nu)\!\cdot\!f_t)\hat{\star}_{H} U\bigr]\!(\nu-\hat\nu,g)
    \Bigr)
\\[2mm]
\text{(by equiv. flow-conv)} \quad  & =\sigma\!\Bigl(
      \psi_{1}(\nu)\!\cdot\! \psi_{t-1}(\hat\nu)\!\cdot\!
      \bigl[h_t[f_{<t}] \star_{H} W\bigr]\!(\nu-\hat\nu,g)
      \;+\;
      \psi_{t}(\hat\nu)\!\cdot\!\bigl[f_t\hat{\star}_{H} U\bigr]\!(\nu-\hat\nu,g)
    \Bigr)
\\[2mm]
\text{(by flow properties)} \quad &=\sigma\!\Bigl(
      \psi_{t}(\hat\nu)\!\cdot\!
      \psi_{1}(\nu-\hat\nu)\!
      \bigl[h_t[f_{<t}] \star_{H} W\bigr]\!(\nu-\hat\nu,g)
      \;+\;
      \psi_{t}(\hat\nu)\!\cdot\!
      \bigl[f_t\hat{\star}_{H} U\bigr]\!(\nu-\hat\nu,g)
    \Bigr)
\\[2mm]
\text{(by eqiv. non-lin.)} \quad &=\psi_{t}(\hat\nu)\!\cdot\!
  \sigma\!\Bigl(
     \psi_{1}(\nu-\hat\nu)\!
     \bigl[h_t[f_{<t}] \star_{H} W\bigr]\!(\nu-\hat\nu,g)
     \;+\;
     \bigl[f_t\hat{\star}_{H} U\bigr]\!(\nu-\hat\nu,g)
  \Bigr)                                              
\\[2mm]
\text{(by FERNN Eqn. \ref{eqn:fernn})}\quad  &=\psi_{t}(\hat\nu)\!\cdot\!h_{t+1}[f](\nu-\hat\nu,g)
\\[2mm]
&=h_{t+1}[f]\bigl(\nu-\hat\nu,\;\psi_{t}(\hat\nu)^{-1}\!\cdot g\bigr).
\end{align}

Thus, assuming the inductive hypothesis for time \(t\) implies the desired relation at time \(t+1\); together with the base case this completes the induction and proves Theorem \ref{thm:fernn_flow_equivariance}.
\end{proof}

Similar to the main text, we additionally provide a visual example which demonstrates that the counterexample to equivariance of the G-RNN now no longer holds for the FERNN. As we see in Figure \ref{fig:fernn_visual_proof}, the additional $V$ dimension results in the moving input being picked up as if it were stationary in the corresponding channel. The $V$ channels can then be seen to permute as a result of the action of the flow in the latent space.

\newpage

\begin{figure}[h!]
  \centering
\includegraphics[width=1.0\linewidth]{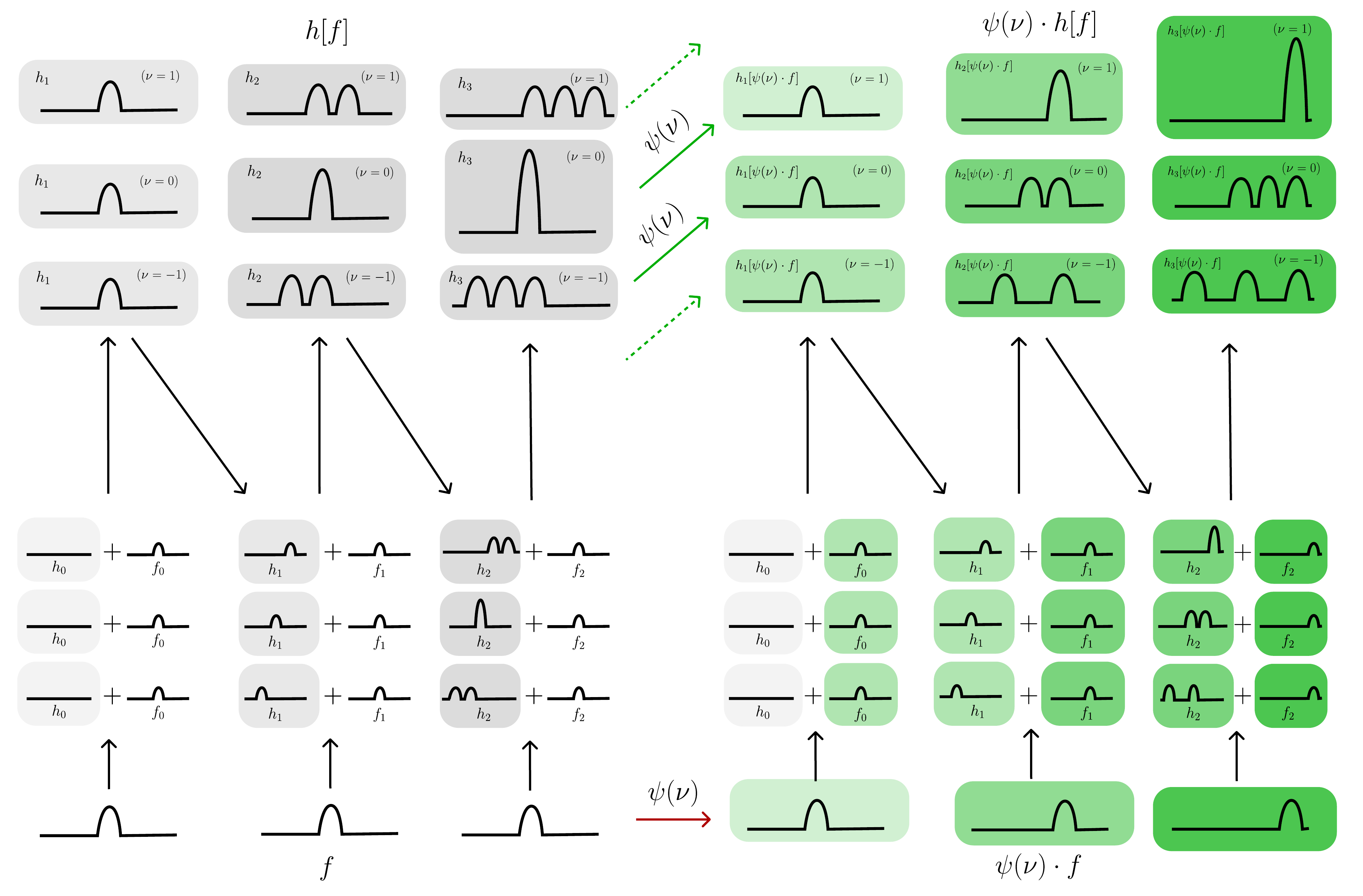}
  \caption{A visualization of the same counter example of Figure \ref{fig:counter_example} from the main text, but for the Flow Equivariant RNN. We see that the model is built with a set of extra $V$ channels, depicted as the three separate rows of hidden states, where each row flows independently per timestep according to its $\nu$ parameter (e.g. the bottom row flows with velocity $-1$, to the left, while the top row flows with velocity $1$ to the right). When the model is then processing an input with a corresponding flow symmetry (for example a motion of velocity $1$ to the right), the corresponding flowing channel of the hidden state processes this input in the same reference frame, as if it were stationary (the top row in this example). The remaining rows then process the input with a difference corresponding to the difference in velocity between the hidden state channel and the input velocity. We see that this results in the $\nu - \hat{\nu}$ channel permutation in the latent space (the vertical shift in the $V$ dimension by 1 row).} 
\label{fig:fernn_visual_proof}
  \vspace{8mm}
\end{figure}


\newpage
\section{Experiment Details}
\label{sec:Experiment_Details}
In this section, we describe the datasets, models, and training/evaluation procedures used in \S \ref{sec:experiments} of the main text. We additionally include samples from each dataset for visualization. The full code to reproduce these experiments is available at: \url{https://github.com/akandykeller/FERNN}

\subsection{Flowing MNIST: Dataset Creation}
\label{sec:flowing_mnist_appendix}
As described in the main text, we construct sequences from the Flowing MNIST dataset by applying flow generators $\nu$ randomly picked from an admissible set $V_{train}$, $V_{val}$, \& $V_{test}$ to samples from the corresponding train / validation / test split of the original MNIST dataset \citep{mnist}. The training sequences are always  composed of $T=20$ time-steps, and identically for the test sequences (except in the length-generalization experiments where the test sequence length is increased to 40 time-steps). We similarly generally set $V_{train} = V_{val} = V_{test}$, except in the velocity generalization experiments, where we specify the training and test flows explicitly. Note that the set $V$ which defines the set of flows to which the FERNNs are equivariant can be anything, and must not match the $V_{train}$ of the dataset. Explicitly, denoting the value of the $i$th MNIST training image at pixel coordinate $(x,y)$ as $m_{train}^{(i)}((x,y))$, a sample from the training set can be written as:
\begin{equation}
    f^{(i)}_{t}((x,y)) = \psi_t(\nu^1) \cdot m_{train}^{(j)}((x,y)) + \psi_t(\nu^2) \cdot m_{train}^{(k)}((x,y)) \quad \forall\  (x, y) \in \sZ^2, \ t \in \sZ_{+} \leq T
\end{equation}
where $\nu^1, \nu^2 \sim V_{train}$ and $j, k$ and random indices sampled with replacement from the MNIST dataset. We note that above, as in the main paper, we have written the signals and the kernels on the infinite domain $\sZ^2$. In practice, however, both are only non-zero on a small portion of this domain (i.e. between $(0,0)$ \& $(28, 28)$ for MNIST digits). We do this to simplify the analysis, analogous to prior work \citep{cohen}.

\paragraph{Translating MNIST.}
For the 2D-translation variant of Flowing MNIST, we consider the group 
\begin{equation}
    G = T(2) = (\sZ^{2}, +), \qquad \text{with binary operation} \qquad (x, y) + (x', y') = (x + x', y + y').
\end{equation}
We use flow generators which are elements of the Lie algebra of this group, $\nu \in \mathfrak{t}(2)$, which we similarly denote as vectors in $\sZ^2$, with the Lie bracket $[\gamma, \nu] = 0\,\ \  \forall\, \gamma,\, \nu\, \in\, \mathfrak{t}(2)$, meaning the translations are commutative. We note that in a matrix representation, we embed the elements of $T(2)$ into the affine group with homogeneous $3\times3$ matrices:
\begin{equation}
    (x,y)\;\longmapsto\;
\begin{pmatrix}
1 & 0 & x \\[2pt]
0 & 1 & y \\[2pt]
0 & 0 & 1
\end{pmatrix},
\end{equation}
where the corresponding Lie algebra elements are
\begin{equation}
X(\nu)=
\begin{pmatrix}
0 & 0 & \nu_1 \\[2pt]
0 & 0 & \nu_2 \\[2pt]
0 & 0 & 0
\end{pmatrix},
\qquad \nu = (\nu_1,\nu_2)\in\mathbb \sZ^{2}.
\end{equation}
The exponential map is then simply: $\exp(X(\nu)) = I + X(\nu)$. We see that the flow is then given as: $\psi_t(\nu) = \exp(t X(\nu)) = I + tX(\nu)$, and the action of the flow on a given pixel coordinate $g = (x, y)$ is given as:
\begin{equation}
\psi_t(\nu) \cdot g = 
\begin{pmatrix}
1 & 0 & t\nu_1 \\[2pt]
0 & 1 & t\nu_2\\[2pt]
0 & 0 & 1
\end{pmatrix}
\begin{pmatrix}
1 & 0 & x \\[2pt]
0 & 1 & y\\[2pt]
0 & 0 & 1
\end{pmatrix}
= \begin{pmatrix}
1 & 0 & x + t \nu_1 \\[2pt]
0 & 1 & y + t \nu_2\\[2pt]
0 & 0 & 1
\end{pmatrix},
\end{equation}
i.e. a shift of the pixel coordinates by velocity $\nu$ for $t$ time-steps.  In practice, order to be able to generate long sequences without excessively large images, we perform all translations with cyclic boundary conditions on our input and hidden states, i.e. $\psi_t(\nu) \cdot (x,y) = \bigl( (x + t\nu_1) \mod W ,\ \  (y + t\nu_2) \mod H\bigr)$, for image size ($H, W$). Since all of our convolutions are also performed with cyclic boundary conditions, this does not impact performance.

In our experiments, to define the sets of generators that we are interested in, we always use the integer lattice up to velocities $N \tfrac{pixels}{step}$. We denote these sets with the notation: 
\begin{equation}
    V^{T}_N := \{\nu  \in \sZ^2\   |\  ||\nu||_{\infty} \leq N\}.
\end{equation} 
For example $V^{T}_2$ is the set of all 2D translation vectors with maximal velocity component $\pm$2 in either dimension, i.e. $V^{T}_2 = \{(-2, -2),\,(-2, -1),\,(-2, 0),\,\ldots\,(2,2)\}$.

\paragraph{Rotating MNIST.}
For the planar rotation variant of Flowing MNIST, we consider the group  
\begin{equation}
    G \;=\; SO(2) \;=\; (\sR,\,+), 
    \qquad\text{with binary operation}\qquad 
    \theta \;+\; \theta' \;=\; \bigl(\theta+\theta'\bigr)\!\!\!\mod 2\pi .
\end{equation}
Elements of the Lie algebra \(\mathfrak{so}(2)\) are one-parameter generators of in-plane rotations, each of the form
\(\nu \;=\; \omega\,J\), where  
\(
J \;=\;\Bigl(
\begin{smallmatrix}
0 & -1 \\[2pt]
1 & \phantom{-}0
\end{smallmatrix}\Bigr)
\)
and \(\omega\in\sZ\) denotes the (integer-scaled) angular velocity.  Because \(\mathfrak{so}(2)\) is abelian, the Lie bracket
\([\gamma,\nu]=0 \;\; \forall\,\gamma,\nu\in\mathfrak{so}(2)\).

\vspace{2pt}
\noindent
We embed \(SO(2)\) into the affine group with \(2\times2\) matrices:
\begin{equation}
    \theta 
    \;\longmapsto\;
    \begin{pmatrix}
        \cos\theta & -\sin\theta \\[2pt]
        \sin\theta & \phantom{-}\cos\theta \\[2pt]
    \end{pmatrix},
\end{equation}
whose corresponding Lie-algebra elements are
\begin{equation}
    X(\nu)
    \;=\;
    \begin{pmatrix}
        0      & -\omega  \\[2pt]
        \omega & \phantom{-}0      \\[2pt]
    \end{pmatrix},
    \qquad
    \nu=\omega J,\;\; \omega\in\sZ .
\end{equation}
Since \(X(\nu)^2=-\omega^2I_{2\times2}\), the exponential map is the usual matrix exponential for planar rotations:
\[
    \exp\!\bigl(X(\nu)\bigr)
    \;=\;
    I \;+\; \sin(\omega)\,J \;+\; \bigl(1-\cos(\omega)\bigr)\,J^{2} \;=\; R(\omega).
\]
The flow generated by \(\nu\) is
\(\psi_t(\nu) = \exp\!\bigl(t\,X(\nu)\bigr)\),  
and its action on a pixel coordinate \(g=(x,y)\) is
\begin{equation}
    \psi_t(\nu)\,\cdot\,g
    \;=\;
    \begin{pmatrix}
        \cos(t\omega) & -\sin(t\omega) \\[2pt]
        \sin(t\omega) & \phantom{-}\cos(t\omega) \\[2pt]
    \end{pmatrix}
    \begin{pmatrix}
        x \\[2pt] y 
    \end{pmatrix}
    \;=\;
    \begin{pmatrix}
        x\cos(t\omega) - y\sin(t\omega) \\[2pt]
        x\sin(t\omega) + y\cos(t\omega) 
    \end{pmatrix},
\end{equation}
i.e.\ a rotation about the image origin by angle \(t\,\omega\). If we center coordinates at the image center, this corresponds to a rotation about the image center as desired.

\vspace{2pt}
\noindent
Following our experimental protocol, we discretize the angular velocity at
\(\Delta\theta = 10^{\circ}\) intervals and collect the set of generators
\begin{equation}
    V^{R}_{N} 
    \;:=\; 
    \bigl\{\;\nu = k\,\Delta\theta\,J \,\big|\, k\in\sZ,\; |k|\le N \bigr\}.
\end{equation}
For instance, $V^{R}_{4}$ would consist of the set of angular velocities \(  \{-40^{\circ},-30^{\circ},-20^{\circ},-10^{\circ},0^{\circ},10^{\circ},20^{\circ},30^{\circ},40^{\circ}\} \frac{\text{deg}}{\text{step}}\) . In practice, to implement the spatial rotation we use the Pytorch function $\mathrm{F.grid\_sample}$ with zero-padding and bilinear interpolation. We additionally zero-pad all images with 6-pixels on each side (resulting in images of size $(40 \times 40)$) to allow for the rotation to fit within the full image frame.

\subsection{Flowing MNIST: Models}
For the Flowing MNIST datasets we compare three types of models: standard group-equivariant RNNs (G-RNNs) as defined in Equation \ref{eqn:grnn}, FERNNs with equivariance to a subset of the training flows (i.e. $V_{model} = V_1^T$ and $V_{train} = V_2^T$), and FERNNs with full equivariance to the training flows ($V_{model} = V_{train}$).  For all models on each dataset we use the same model architecture, and since there are no extra parameters introduced by the FERNN model, all models have the same number of trainable parameters. 

\paragraph{Translating MNIST G-RNN.} For the translation group, the corresponding group-convolution is the standard 2D convolution. We therefore build a G-RNN exactly as written in Equation \ref{eqn:grnn} with regular convolutional layers in place of the group-convolution. We use kernel sizes of $3\times3$ for both $\gU$ and $\gW$ with no bias terms, strides of $1$ and circular padding of $1$, resulting in a hidden state with spatial dimensions equal to the input spatial dimensions: $28 \times 28$. We use $128$ output channels for our convolutional `encoder' $\gU$, and similarly $128$ input and output channels for our recurrent kernel $\gW$. This results in a hidden state $h \in \R^{128 \times 28 \times 28}$. We use a ReLU activation function: $\sigma(h) = \max(0, h)$. We initialize all hidden states to zero: $h_0 = \mathbf{0}$, satisfying our equivariance proof assumptions. At each timestep, we decode the updated hidden state to predict the next input through a 4-layer CNN decoder: $g_{\theta}(h_{t+1}) = \hat{f}_{t+1}$. The CNN decoder is composed of three primary convolutional layers each with $128$ input and output channels, kernel size $3 \times 3$, stride 1, and circular padding 1, followed by ReLU activations. A final identical convolutional layer, but with $1$ output channel, is used to predict $\hat{f}_{t+1}$. 

\paragraph{Translating MNIST FERNNs.} For the FERNNs, we use the exact same RNN and decoder architecture, with the only difference being that we extend the hidden state with an extra $V$ dimension for each of the flows in $V_{model}$: $h(\nu, g)$. Explicitly then, through the trivial lift, the input is copied identically to each of these $\nu$ channels, and the flow convolution on the hidden state also operates identically on each $\nu$ channel. As stated in the main text, we define the flow convolution to not mix the $V$ channels at all, explicitly: $\gW^i_k\!\bigl(\nu, g\bigr) = \delta_{\nu=e}\gW^i_k\!\bigl(g\bigr)$, thus giving us constant parameter count. Our lifted hidden state is thus $h \in \R^{|V_{model}| \times128 \times 28 \times 28}$. The flow action $\psi_1(\nu) \cdot$ in the recurrence is implemented practically as a $\mathrm{Roll}$ of the hidden state tensor by $(\nu_1, \nu_2)$ steps along the $(x,y)$ spatial dimensions. To achieve invariance of the reconstruction to the input flows, and thereby achieve the generalization we report, we max-pool over the $V$ dimensions before decoding. Explicitly, the output of the model for each time step is computed as: $g_\theta(\max_\nu h_{t+1}(\nu, g)) = \hat{f}_{t+1}$.

\paragraph{Rotating MNIST G-RNN.} For the rotating MNIST models, we use a nearly identical setup to the translation experiments, with the only difference being that we use $SE(2)$ group convolutions in place of the standard convolutional layers to achieve the necessary rotation equivariance. In practice, we use the \emph{escnn} library to implement the $SE(2)$ convolutions \citep{escnn}. We discretize the rotation group into $\Delta\theta = 10^o$ rotations, yielding a cyclic group with 36 elements: $C_{36}$. We lift the input to this space and assert a regular representation of the group action on the output. Due to the increased dimensionality of the hidden state from the lift to the discrete rotation group, we decrease the number of hidden state channels to $32$ due to hardware limitations. This then yields a hidden state: $h \in \R^{36 \times 32 \times 40 \times 40}$.

\paragraph{Rotating MNIST FERNNs.} For the FERNN models, we follow the same procedure as for the translating MNIST FERNNs, except with the action of the group $\psi_1(\nu) \cdot$ in the recurrence now taking the form of the regular representation of rotation in the $SE(2)$ equivariant CNN output space. Explicitly, this means that in addition to rotating the inputs, we also permute along the lifted rotation channel for each angular velocity $\nu$. Again this yields a hidden state of size: $h \in \R^{|V_{model}| \times 36 \times 32 \times 40 \times 40}$.

\subsection{Flowing MNIST: Next-Step Prediction Training \& Evaluation}
\label{appendix:mnist_details}
For the in-distribution next step prediction experiments, (Table \ref{tab:mnist_mse} and Figure \ref{fig:train_loss}) of the main text, we use $V_{train} = V_{val} = V_{test} = V^T_2$ for the translating MNIST experiments, and $V_{train} = V_{val} = V_{test} = V^R_4$ for the rotating MNIST experiments. We set the training sequence length to 20 steps, providing the models with 10 time-steps as input, and computing the next-step prediction reconstruction loss (MSE) of the model output on the remaining 10 time-steps. Explicitly:
\begin{equation}
    \gL = \frac{1}{10}\sum_{t=11}^{20} ||f_{t} - g_{\theta}(h_{t})||^2_2
\end{equation}

All models are trained for 50 epochs, with a learning rate of $1 \times 10^{-4}$ using the Adam optimizer \citep{kingma2017adammethodstochasticoptimization}. For translation flows, we use a batch size of 128, and clip gradient magnitudes at $1$ in all models for additional stability. For rotation flows we use a batch size of 32 due to memory constraints, and find gradient clipping not necessary. For evaluation, we save the model with the best performance on the validation set (over epochs), and report its corresponding performance on the held-out test set. For each model we train with 5 random initializations (5 seeds) and report the mean and standard deviation of the test set performance from the best saved models.

\paragraph{Length Generalization.} For the out-of-distribution length generalization experiments (Figure \ref{fig:len_gen_samples}) we again use $V_{train} = V_{val} = V_{test} = V^T_2$ for the translating MNIST experiments, and $V_{train} = V_{val} = V_{test} = V^R_4$ for the rotating MNIST experiments. Where the length of training and validation sequences is again set to $T=20$ as before. At test time, we increase the length of the sequences to $T=70$, but continue to only feed the models the first $10$ time-steps as input. In Figure \ref{fig:len_gen_samples} we show the loss of the model for each of these remaining 60 time steps ahead. 

\paragraph{Velocity Generalization.} For the out-of-distribution velocity generalization experiments (Figure \ref{fig:v_generalization}), we leave the train and test sequence length at 20, but we instead set $V_{train} = V_{val} \subset V_{test}$. Explicitly, in Figure \ref{fig:v_generalization}, for rotating MNIST, we set $V_{train} = V_{val} = V^R_1$ and $V_{test} = V^R_5$. For translating MNIST, we set $V_{train} = V_{val} = V^T_1$ and $V_{test} = V^T_2$. This tests the ability of models to generalize to new flows not seen during training, and we see that the FERNNs perform significantly better on this test set when they are made equivariant to these velocities. 

\paragraph{Velocity Interpolation.} For Figure \ref{fig:v_interp}, we mimic the next-step prediction training and evaluation setting as before for Translating MNIST, but restrict the velocities to only lie in the X-direction between $-5$ to $5$. Specifically, we set  $V_{train} = V_{val} = V_{test} = \{(-5, 0), (-4, 0), \ldots (4, 0), (5,0) \}$. We then test the performance of a suite of FERNN models on this dataset with varying subsets of equivariance built in, i.e. $V_{model} \subseteq V_{train}$. In the plot we show four models: G-RNN, where $V = \{(0,0)\}$; FERNN$_{\{-3, 0, 3\}}$, where $V = \{(-3, 0), (0,0), (3, 0) \}$; FERNN$_{\{-5, -3, -1, 0, 1, 3, 5\}}$, where $V = \{(-5, 0), (-3, 0), (-1, 0), (0,0), (1, 0), (3, 0), (5, 0)\}$; and FERNN$_{[-5, 5]}$, where $V_{model} = V_{train}$ is the full set of x-velocities from $-5$ to $5$. We compute the Test MSE per translation velocity independently by averaging over test sample sequences with the given velocity.

\subsection{Moving KTH: Datasets}
\looseness=-1
To test the benefits of flow equivariance on a sequence classification task with real image sequences, we opted to use the KTH action recognition dataset \citep{schuldt2004recognizing}, obtained from \url{http://www.csc.kth.se/cvap/actions/}. The dataset is composed of 2391 videos of 25 people performing 6 different actions: running, jogging, walking, boxing, hand clapping, and hand waving. The original videos are provided at a resolution of $160 \times 120$ with $25$ frames per second, and an average clip length of 4 seconds. The training, validation, and test sets are constructed from this dataset by taking the videos from the first 16 people as training, the next 4 people as validation, and the last 5 people as test. We split the videos into clips of 32-frames each, downsample to a spatial resolution of $32 \times 32$, and subsample them by half in time, yielding a final set of clips which are $16$ steps long. 

Since the videos from this dataset are taken entirely from a stationary camera viewpoint, there are no global flows of the input space which our model might benefit from. We call this original dataset KTH with $V_0^T$ (no motion). To test the benefits of flow equivariance in an action recognition setting with a moving viewpoint, we construct two additional variants of the KTH dataset, augmented by translation flows from the sets $V_1^T$ and $V_2^T$. These sets are identical to those described in the Flowing MNIST examples: translation with circular boundary conditions. Again, since we use convolution with circular boundary conditions in all our models, this does not impact model performance.  

\subsection{Moving KTH: Models}
We compare five models on the KTH dataset: a 3D-CNN, a G-RNN, two FERNN variants, and an ablation of the FERNN (denoted G-RNN+). We describe these in detail below:

\paragraph{Moving KTH 3D-CNN.} The spatio-temporal 3D-CNN baseline is built as a sequence of five 3D convolution layers, interleaved with 3D batchnorm and ReLU activations. Each layer has kernels of shape ($3 \times 3 \times 3$), no bias, and padding 1. The first layer has 16 output channels, a temporal stride of 2, and a spatial stride of 1. Layer 2 has 32 output channels, temporal stride of 1, and spatial stride of 2. Layer 3 has 32 output channels, temporal stride of 1, and spatial stride of 1. Layer 4 has 64 output channels, temporal stride of 1, and spatial stride of 2. Layer 5 has 64 output channels, temporal stride of 1, and spatial stride of 1. This final layer is followed by a global average pooling over the remaining $(8 \times 8 \times 8)$ space-time feature map dimensions, yeilding a single vector of dimensionality $64$ which is passed through a linear layer to predict the logits for the 6 classes. 

\paragraph{Moving KTH G-RNN.} For the baseline G-RNN, each grayscale input frame \(f_t\in\R^{1\times 32\times 32}\) is passed through a three-layer convolutional encoder that preserves spatial resolution:  
\[
\mathrm{Conv}_{5\times5}^{1\!\to\!32}\;\xrightarrow{\text{BN{+}ReLU}}\;
\mathrm{Conv}_{3\times3}^{32\!\to\!64}\;\xrightarrow{\text{BN{+}ReLU}}\;
\mathrm{Conv}_{3\times3}^{64\!\to\!128}\;\xrightarrow{\text{BN{+}ReLU}} ,
\]
all with stride \(1\) and circular padding chosen so that the hidden state has the same spatial dimensions as the input (\(32\times 32\)).  The output defines the encoder feature map \(\gU * f_t\). The hidden state \(h_t\in\R^{128\times 32\times 32}\) is updated with a single recurrent layer using a circularly padded \(3\times3\) convolution where \(\gW\in\R^{128\times128\times3\times3}\) contains no bias terms. The recurrent convolution is also followed by a batch-norm and ReLU non-linearity for added expressivity. The non-linearity $\sigma$ is \(\tanh\). Initial states are zero, \(h_0=\mathbf 0\), satisfying the assumptions of our equivariance proof. At the final timestep of the sequence $(t = 16)$ we take the hidden state of the RNN, perform global average pooling to \(1\times1\) spatial dimensions, and feed the resulting \(128\)-dimensional vector through a fully-connected layer:
\[
g_\theta(h_{T}) = \operatorname{FC}_{128\!\to\!6}\bigl(\operatorname{SpatialAvgPool}(h_T)\bigr)\in\R^{6},
\]
producing logits for the six KTH action classes.

\paragraph{Moving KTH FERNNs.} For the FERNNs, we use the exact same architecture, but use the trivial lift to lift the corresponding sets of flows $V^T_1$ for FERNN-$V_1^T$ and $V^T_2$ for FERNN-$V_2^T$ (denoted FERNN-1 and FERNN-2 in Figure \ref{fig:kth_acc}). Concretely, for every translation velocity $\nu\in V_{\text{model}}$ we allocate an additional velocity channel, so that the hidden state becomes $h_t(\nu,g)\;\in\;\mathbb R^{|V_{\text{model}}|\times 128\times 32\times 32}$, where \(g=(x,y)\) indexes spatial position.  Input frames are trivially lifted by copying the same encoder features into each \(\nu\)-channel. Following Equation ~\ref{eqn:fernn}, the flow action in the recurrence is implemented again as a $\mathrm{Roll}$ of the spatial dimensions of the hidden tensor by \((\nu_x,\nu_y)\) pixels, and the flow convolution uses weight sharing across velocities, \(\gW^i_k(\nu,g)=\delta_{\nu=e}\,\gW^i_k(g)\), so the total number of trainable parameters is identical to the G-RNN. As in the Flowing-MNIST experiments, we consider two settings: (i) \emph{partial} equivariance with \(V_{\text{model}}=V_1^T\subset V_{\text{train}}=V_2^T\) and (ii) \emph{full} equivariance where \(V_{\text{model}}=V_{\text{train}}\).  To achieve flow-invariant action classification we take the maximum over the velocity dimension after the final FERNN layer, \(\max_{\nu}h_t(\nu,g)\), followed by the same global-average-pooling and linear classifier used for the G-RNN.  This design ensures that any translation-induced shifts present at test time are pooled over, yielding the generalization results reported in Figure \ref{fig:kth_acc}.

\paragraph{Moving KTH G-RNN+.} To ensure that the observed performance improvement of the FERNN was not simply due to the increased number of hidden state activations and the associated max-pooling, but instead could be attributed the precise flow equivariant form of the recurrence introduced in Equation \ref{eqn:fernn}, we built a third baseline which is as close as possible to the best performing FERNN (FERNN-$V^T_2$), while removing precise flow equivariance. Specifically, while keeping all other architectural components of the FERNN-$V_2^T$ identical, we replaced the single-step action of the flow in the recurrence ($\psi_1(\nu) \cdot$) with convolution by a separate learned $5 \times 5$ convolutional kernel for each $\nu$ channel (randomly inititalized). Since the action of $\psi_1(\nu)$ is a simple translation (a local linear operation), this can indeed be represented by such a kernel. However, as we see in practice (Figure \ref{fig:kth_acc}), the model fails to learn such kernels and instead overfits to the training data distribution.

\subsection{Moving KTH: Action Recognition Training \& Evaluation}
Models are trained to minimize the cross entropy loss between the predicted class and the ground truth label using the Adam optimizer. Due to the small dataset size, all models are trained for 500 epochs, with a batch size of 32. We search over learning rates in the set $\{3\times 10^{-3},\ 1\times 10^{-3},\ 3\times 10^{-4},\, 1\times 10^{-4}\}$, for each model, running three random initialization seeds for each. For each random seed of each hyper-parameter setting, we store the model with the best validation loss. We then pick the best performing model based on the mean value of the best validation loss across all three seeds. We then report in Table \ref{tab:kth} the mean test loss of the models (3 seeds) saved at the best validation loss epoch for the best identified learning rate. We generally find the lower learning rates $(3\times 10^{-4})$ work better for the RNN models, while the higher learning rates $(1 \times 10^{-3})$ work better for the CNNs.

\subsection{Moving KTH Samples}
\label{sec:moving_kth_samples}

\begin{figure*}[h!]          
  \centering
  \begin{subfigure}[t]{0.32\linewidth}
    \centering
    \includegraphics[width=\linewidth]{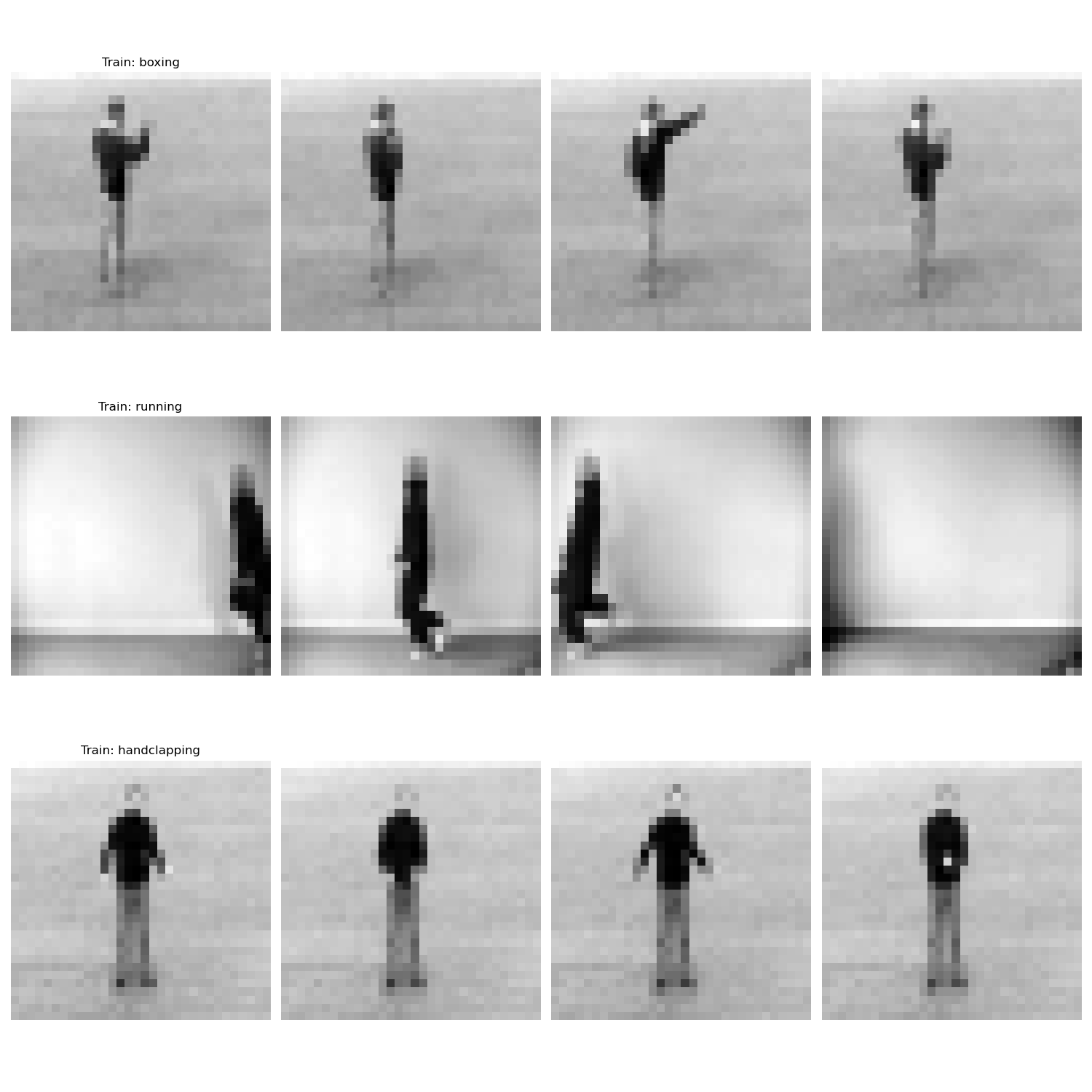}
    \caption{\textbf{Original KTH dataset $V_0^T$}}
    \label{fig:kth_v0}
  \end{subfigure}
  \hfill
  \begin{subfigure}[t]{0.32\linewidth}
    \centering
    \includegraphics[width=\linewidth]{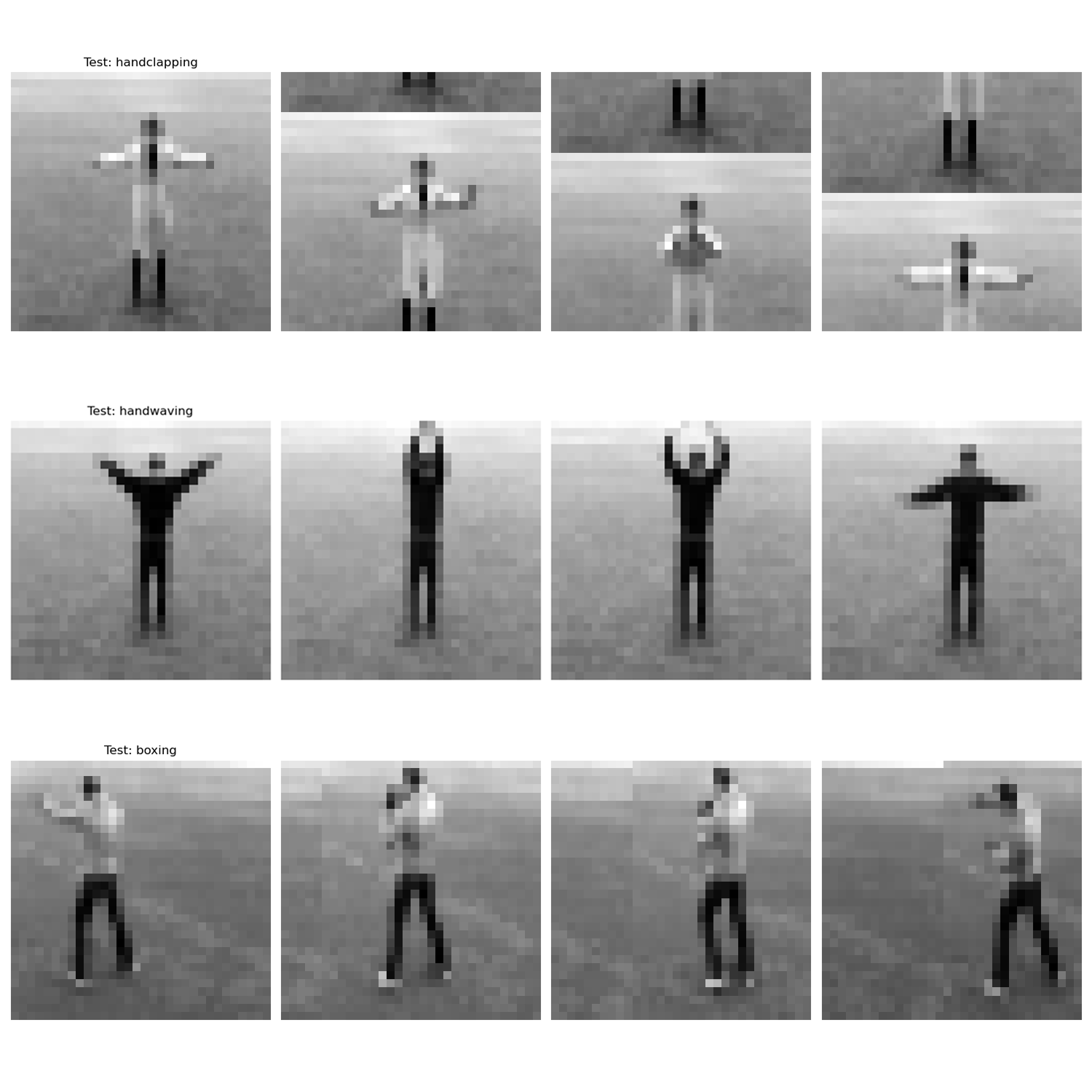}
    \caption{\textbf{KTH with $V = V_1^T$}}
    \label{fig:kth_v1}
  \end{subfigure}
  \hfill
  \begin{subfigure}[t]{0.32\linewidth}
    \centering
    \includegraphics[width=\linewidth]{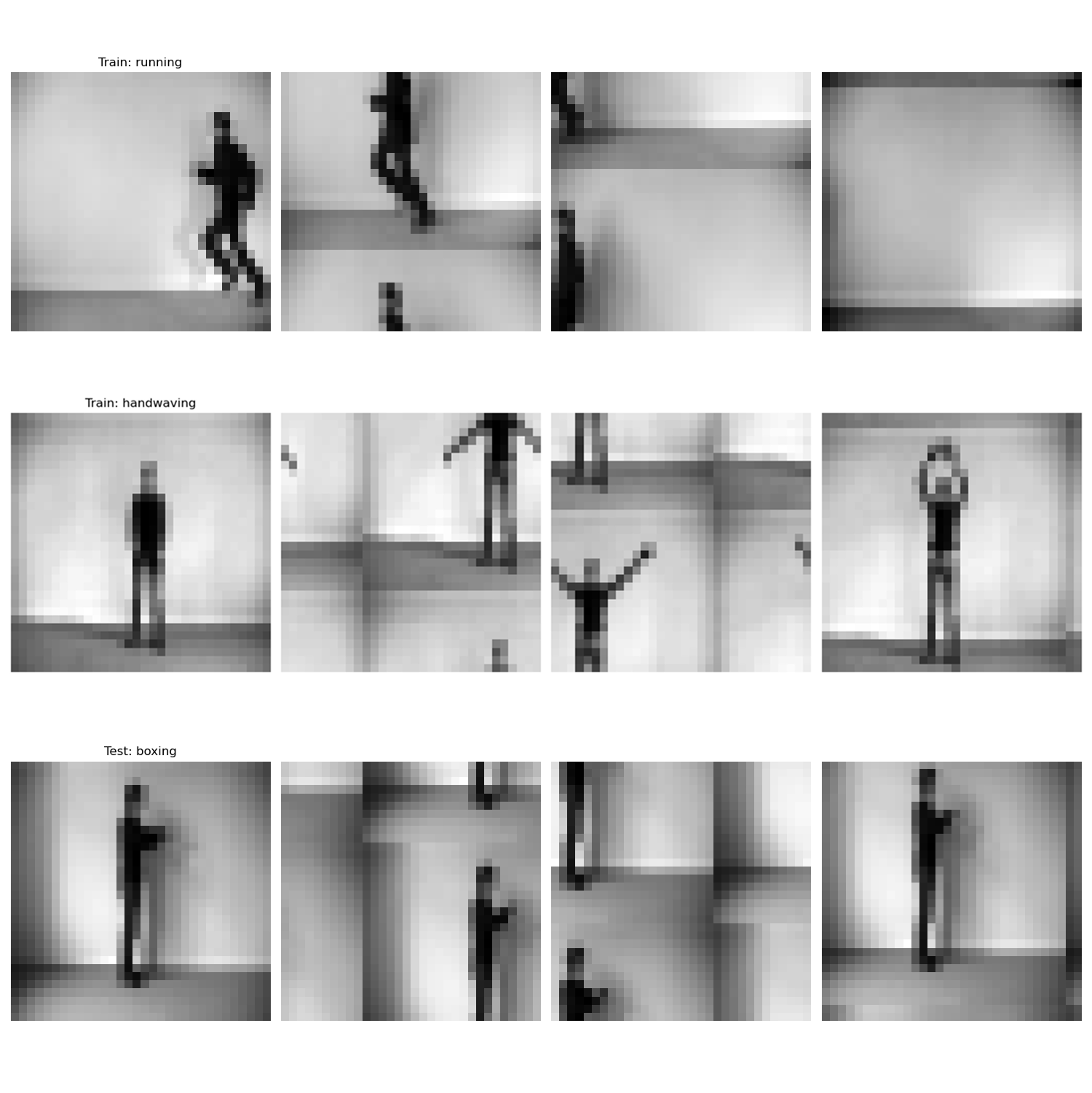}
    \caption{\textbf{KTH with $V = V_2^T$}}
    \label{fig:kth_v2}
  \end{subfigure}
  \vspace{4mm}
  \caption{Samples of the original KTH dataset and its two motion-augmented variants.}
  \label{fig:kth_datasets}
  \vspace{5mm}
\end{figure*}

\subsection{Compute}
All experiments in this paper were performed on a private cluster containing a mixture of NVIDIA A100 and H100 GPUs, each having 40GB and 80GB of VRAM respectively. No parallelization of individual models across GPUs was required, i.e. most models and training paradigms were able to fit on a single A100 GPU, with the larger models on a single H100 GPU. The cluster nodes allocated up to 24 CPU cores and 375GB of RAM per job, although only a small fraction of this was required for training and evaluation. The majority of models were able to train fully in less than 24 hours. For example, the FERNN-$V_2^T$ models on KTH trained in 7 hours, and the FERNN-$V_2^T$ models on MNIST trained in 15 hours, with all other models training faster. The significant exception to this were the FERNN-$V_4^R$ models on Rotating MNIST which took roughly 67 hours to complete 50 epochs (although they converged much more quickly than this, see Figure \ref{fig:train_loss}, we ran them to the same number of epochs as the G-RNN for consistency). The reason for this increased computational time was an inefficient implementation of the rotation operation and our custom recurrence, which could both be accelerated in future work. Specifically, we used a naive vanilla Pytorch implementation of our custom FERNN recurrence (Equation \ref{eqn:fernn}) using `for loops', which dramatically slowed down training for all models. In future work, implementation of the model with a scan operation in JAX, or a custom CUDA kernel would dramatically improve runtime performance. Overall, we estimate the computational requirements necessary to develop the models and run all experiments for this paper totaled approximately 30 days of H100 compute time.

\subsection{Runtime and Memory Usage vs. $|V|$}
\label{appendix:comp_complex}
The flow convolution in Equation \ref{eqn:flow_conv} performs a convolution over standard group elements ($g \in G$, which can be thought of as spatial positions for the simple translation case), and also over flow generators ($\nu \in V$, which can be thought of as movement velocities). This can be implemented efficiently as a 3D convolution for the case of 2D images, and a 1D generator group. For sets with higher dimensional generators, one can use N-D convolutions, which are also implemented efficiently in frameworks like Jax. 

We predict the computational complexity of the model and the memory usage should scale linearly with the size of the set of generators $V$ (which we denote $|V|$). In the figure below we validate this by plotting the memory requirements and runtime per epoch as a function of the size of this set.  Note the G-RNN is equivalent to a FERNN with $|V| = 1$, i.e. $V = \{\mathbf{0}\}$. 
\begin{figure}[h!]
  \centering
  \includegraphics[width=1.0\linewidth]{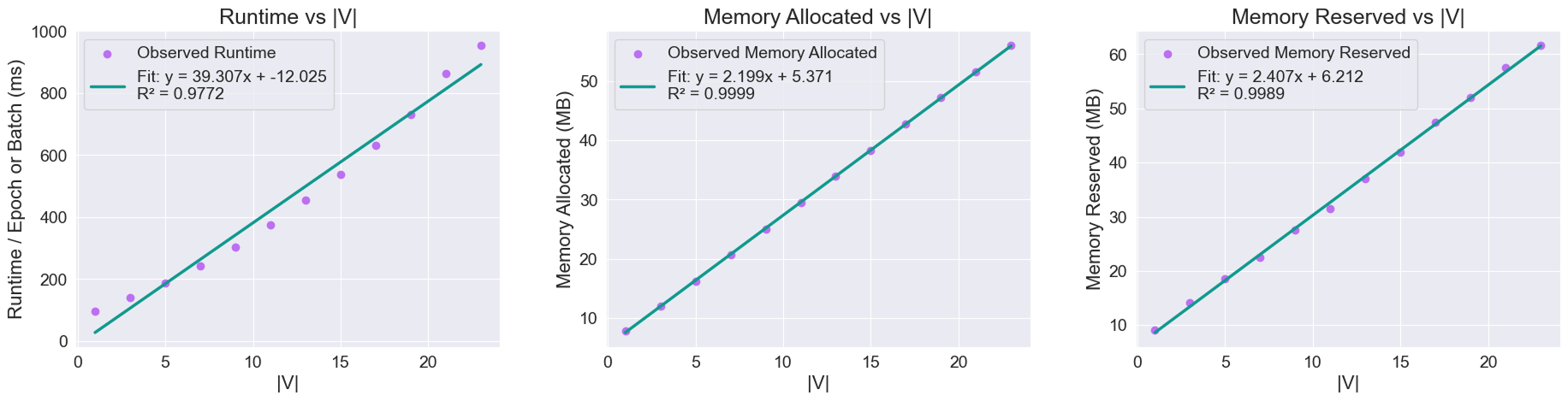}
  \caption{\textbf{Runtime and Memory usage scale linearly with size of flow generator set }$\mathbf{|V|}$. Results generated with the FERNN-$V_n^T$ on Translating MNIST, restricted to translation in only the x-direction.}
  \label{fig:computational_complexity}
\end{figure}

\newpage
\section{Variations of FERNNs}
\subsection{FERNN with Non-Trivial Lift}
\label{sec:non_trivial_lift}
As noted in \S \ref{sec:fernn}, it is possible to build a FERNN with a non-trivial lift, such that the lifting convolution itself incorporates the flow transformation for each $\nu$ dimension. 

Explicitly, we can define such a lift as:
\begin{equation}
    \label{eqn:non_trivial_lift}
     [f_t \ \hat{\star}_{V\times G}\ \gU^i](\nu, g) = \sum_{x \in X} \sum_{k=1}^K f_k(x) \gU_k^i(g^{-1} \cdot \psi_t(\nu)^{-1} \cdot x)
\end{equation}

Another way to think of this, is that there is a time-parameterized input kernel defined on the full space $V \times G$, i.e. $\hat{\gU}(\nu, g, t) = \gU(\psi_t(\nu)^{-1} \cdot g)$; however, we find this viewpoint less elegant given that the kernel then depends on time. 

We see then when the flow is incorporated into the lifting convolution, the output of the convolution is no longer constant along the $\nu$ index (as it was in the trivial lift). Instead, it is now flowing according to the $\nu$'th vector field. Therefore, when processing an input undergoing a given flow $\psi(\hat{\nu})$, this input flow will combine with the flows of the lifting convolution to yield a shift along the $V$ dimensions, similar to what we previously observed in the hidden state, i.e.: 
\begin{align}
    \label{eqn:non_trivial_lift_equivariance}
     [(\psi_t (\hat{\nu}) \cdot f_t) \ \hat{\star}_{V\times G}\ \gU^i](\nu, g) & = \sum_{x \in X} \sum_{k=1}^K f_k(\psi_t (\hat{\nu})^{-1} \cdot x) \gU_k^i(g^{-1} \cdot \psi_t(\nu)^{-1} \cdot x) \\ 
     & = \sum_{\hat{x} \in X} \sum_{k=1}^K f_k(\hat{x}) \gU_k^i(g^{-1} \cdot \psi_t (\nu -\hat{\nu})^{-1} \cdot \hat{x}) \quad \text{(where $\hat{x} = \psi_t (\hat{\nu})^{-1} \cdot x$)}\\ 
     & = [f_t \ \hat{\star}_{V\times G}\ \gU^i](\nu - \hat{\nu}, g)
\end{align}

Notably then, keeping the flow convolution from Equation \ref{eqn:flow_conv} unchanged, we see that we must remove the additional $\psi_1(\nu)$ shift from the original FERNN in order to maintain flow equivariance. Specifically, the new non-trivial-lift recurrence relation is then given simply as:
\begin{equation}
\label{eqn:fernn_nontrivial}
    h_{t+1}(\nu, g) = \sigma\bigl(\left[h_t \star_{V \times G} \mathcal{W}\right](\nu, g) + \left[f_t\ \hat{\star}_{V \times G}\ \mathcal{U}\right](\nu, g)\bigr).
\end{equation}

In this setting, the action of the flow on the output space changes to just a permutation of the $V$ dimension, with no corresponding flow on g:
\begin{equation}
    (\psi(\hat{\nu}) \cdot h[f])_t(\nu, g) =  h_t[f](\nu - \hat{\nu}, g)
\end{equation}
In a sense, this model can be seen as `undoing' the action of each flow on the input when lifting. We find this to be somewhat analogous to the traditional group-equivariant CNN design choice where the transformation can \emph{either} be applied to the filter or the input. In the FERNN setting, the `filter' is now defined by the full recurrence relation, so we can either apply the flow transformation to the input sequence, or to the hidden state sequence.

Overall, we find this to be a slightly less elegant construction since the indexing of the kernel in the convolution is then dependent on the time index explicitly. In the trivial-lift setting introduced in the main text, this time-dependence is rather implicitly imposed by the recurrence of the hidden state itself, therefore allowing us to only require the instantaneous one-step flows during each recurrent update. Regardless, we are interested in future work which may explore this non-trivial lift setting more fully, and other interpretations of the FERNN model as described. 
\newpage
\section{Related Work}
\label{appendix:related_work}

In this section we provide an overview of work which is related to flow equivariance and equivariance with respect to time-parameterized symmetries generally. 

\subsection{Flow Equivariance without a Hidden State}
As mentioned in the main text, it is possible to achieve flow equivariance without an explicitly flow-equivariant sequence model. The two primary methods for accomplishing this are through frame-wise application of an equivariant model (as described in \S \ref{sec:flow_eq}) and through group-convolution over the entire space-time block (as described in \S \ref{sec:discussion}). Both of these methods are verifiably flow-equivariant, however they are fundamentally a different class of model than what we have described in this work. They are not recurrent sequence models, and therefore intrinsically have a finite temporal context or `receptive field' which can be used when computing any output. By contrast, recurrent networks can theoretically have an infinite temporal context, if need be, through the maintenance of a hidden-state. Given this is a fundamental distinction between recurrent and non-recurrent networks which is the subject of research beyond the domains of equivariance, we find it to be beyond the scope of this work to compare with these models explicitly. Instead, we propose flow equivariant RNNs as an extension of existing equivariant network theory to this class of models which maintain a hidden state and can operate in the online recurrent setting.

The list of prior work which can be included in this category of `flow equivariance without a hidden state' is quite broad, since it encompasses most of the equivariant deep learning literature to date, however we list a few notable examples here. The Lorentz equivariant work of \cite{bogatskiy2020lorentzgroupequivariantneural, Gong_2022} is the most relevant, while other applied work has developed 3-D convolutional networks which are equivariant with respect to Galilean shifts (our translation flows) in the context of event-based cameras \citep{zhu2019motionequivariantnetworksevent}, or rotations over time in the context of medical imaging \citep{zhu2024srecnnspatiotemporalrotationequivariantcnn}. Early work developed Minkowski CNNs \citep{choy20194dspatiotemporalconvnetsminkowski}, which are equivariant with respect to 4D translations, thus making them equivariant to axis-aligned motions. Further, Clifford Steerable CNNs \citep{pmlr-v235-zhdanov24a} have also been developed to achieve Poincaré-equivariance on Minkowski spacetime. Related work on equivariance for PDE solving / forecasting has built dynamics models which are equivariant with respect to galilean transformations in the sense that the model is equivariant if the input vector field has a global additive constant \citep{wang2021incorporatingsymmetrydeepdynamics}. While this is valid for neural networks applied to vector field data as input, it is clearly not the same as our method in more general settings. The method of \cite{wang2021incorporatingsymmetrydeepdynamics} can be interpreted as viewing a dynamical system which has an unknown global current introduced, while ours is better interpreted as viewing the dynamical system from a moving reference frame -- the two concepts are compatible and may even be combined.  

\subsection{`Statically Equivariant' Sequence Models}
The second broad category of related work includes sequence models which are equivariant with respect to instantaneous static group transformations, but are not equivariant with respect to time-parameterized group transformations, as our FERNN is. Examples in this category include \citep{azari2022equivariantdeepdynamicalmodel, nguyen2023equivariant, 10.1609/aaai.v37i6.25832}, which introduce equivariance to transformations such as static rotations in sequence models including RNNs. For example, these models train on one frame of reference and then test on a `rotated' frame of reference. Our work can be seen to generalize these models to instead allow them to be tested on \emph{`rotating'} frames of reference. This class of prior work can most readily be compared to the group equivariant RNN (G-RNN) we describe in \S \ref{sec:equivariance}. Other researchers have developed sequence to sequence models which are equivariant with respect to fixed permutations and demonstrated that this is beneficial in the context of language modeling  \citep{Gordon2020Permutation}. Further, recent work has developed an equivariant sequence autoencoder which uses group convolutions in an LSTM to achieve static equivariance for the purpose of PDE modeling \citep{fromme2025surrogatemodeling3drayleighbenard}.

\subsection{Neuroscience and Biologically Inspired Neural Networks}
\looseness=-1
The study of symmetry has also grown increasingly relevant in the computational neuroscience literature. \cite{NEURIPS2022_65384a01} have studied equivariant representations in biological systems in terms of continuous attractor networks. Interestingly, these models also use convolutional recurrent dynamics; however, again, this work only considers static translations and not motion over time. Others such as \cite{lindeberg_2013} have developed a general theory for motion equivariant receptive fields in the visual system \citep{lindeberg_2025}, and even built spiking recurrent networks which integrate such filters \citep{pedersen}, yet these models are equivariant only in the encoder portion of the network, not in the recurrence as we propose for FERNNs.

One class of models which is highly related to the FERNN in both theory and implementation comes from a line of biologically inspired work aiming to learn symmetries from data. One of the first models in this category, the Topographic VAE \citep{keller2022topographicvaeslearnequivariant} can be seen as similar to our FERNN but with an explicitly imposed translation flow of a single velocity in the latent space. This makes these models equivariant to input transformations which are isomorphic to the translation group on the integers modulo the capsule length. Interestingly, the authors find that by simply imposing this translation flow in the latent space, the model learns to encode dataset symmetries into these flows in order to better model the dataset. This seems to imply that simply imposing flow symmetries in sequence models without a priori knowledge of the structure of the flow symmetries in the input may still be beneficial. Similar results were shown with the Neural Wave Machine \citep{pmlr-v202-keller23a}, where flows in the latent space were implemented implicitly through a bias towards traveling wave dynamics. Finally, perhaps most interestingly, related work on traveling waves in simple recurrent neural networks (the wave-RNN) \citep{keller2024travelingwavesencoderecent} and SSMs \citep{10.57736/b30b-8eed} has demonstrated that waves implemented through similar `roll' operations have significant benefits for long-term memory in recurrent neural network architectures. Incredibly, these models are identical to translation flow-equivariant RNNs in implementation, but without any mention of equivariance, and applied to an entirely different set of tasks. It is therefore of great interest to study if there is something unique to translation flows which benefit memory performance, or if similar performance benefits may be gained from any latent flow symmetry. 

Finally, convolutional RNNs are becoming increasingly of interest in the computational neuroscience and `NeuroAI' domains \citep{10.3389/fpsyg.2017.01551}, with notable examples being trained as `foundation models' for mouse visual cortex \citep{foundation_model}, and recently demonstrating state-of-the-art performance in modeling rodent somatosensory cortex \citep{chung2025taskoptimizedconvolutionalrecurrentnetworks}. Our work provides a new theoretical lens through which to study and build such models, offering potential novel insights into biological neural systems.

\subsection{Reference Frames in Neural Networks}
As mentioned in \S \ref{sec:fernn}, one way to interpret the flow-equivariant RNN is that its hidden state lives in a number of moving reference frames simultaneously (one for each $\nu \in V$). Thus, for moving inputs, the corresponding co-moving hidden state reference frame will see the input as stationary, and process it as normal. This idea of reference frames in neural networks is not new, and significant interesting related work should be noted. 

Specifically, Spatial Transformer Networks \citep{jaderberg2016spatialtransformernetworks}, and Recurrent Spatial Transformer Networks \citep{sønderby2015recurrentspatialtransformernetworks} can be seen to predict a frame of reference for a given input and then switch to that reference frame to gain invariance properties. However, these models do not discuss moving reference frames. Other models have been built in this vein with respect to other symmetries, such as polar coordinate networks which are inherently equivariant with respect to scale and rotation \citep{esteves2018polartransformernetworks}. Related in theory is the idea of Capsule Networks \citep{sabour2017dynamicroutingcapsules, matrix_caps}. These models do have an explicit notion of reference frames, similar to ours, and use this to gain a structured equivariant representation, but again this is defined only in the spatial context. 

\subsection{Broadly Related Work}
More broadly, prior work has looked at the integration of recurrence and motion modeling specifically for vision \citep{wu2021motionrnnflexiblemodelvideo, gehrig2023recurrentvisiontransformersobject}. These models do not have any mention of motion equivariance, and therefore are highly unlikely to provide the strong generalization benefits such as those that we present in this paper. Other work has studied ego motion for action recognition \citep{lopezcifuentes2020prospectivestudysequencedriventemporal}; and relevant work has looked at equivariance in the context of object tracking \citep{gupta2020rotationequivariantsiamesenetworks, sosnovik2020scaleequivarianceimprovessiamese}.

\subsection{Equivariant Dynamical Systems}
\label{sec:eq_ds}
In the dynamical systems literature, equivariance is typically defined for autonomous (or homogeneous) dynamical systems which have no `input' or `driving force'. Abstractly, for a system $\frac{dx}{dt} = f(x)$, we say the dynamical system is equivariant if $f(g \cdot x) = g \cdot f(x)$ for all $g \in G$. Then for any $x(t)$ that solves the differential equation, we also know that $g \cdot x(t)$ solves the differential equation for the full group orbit $g \in G$ \citep{equivariant_ds}. Similar to the equivariant neural network setting, we see that the `output' of an equivariant dynamical system (interpreted as the particular solution) transforms in a predictable well-behaved manner for a given transformation of the input. As the simplest example, $\frac{dx}{dt} = x + x^3$, has a sign flip symmetry in $x$, meaning that if take the sign flipped version of the trajectory $x(t)$, the solution to the equation also inherits a sign flip. It is straightforward to see from the above definition that if the function $f(x)$ defining the time derivative is equivariant with respect to $G$, then the system is considered equivariant, since the definitions are equivalent. 

In this work, we are interested in recurrent neural networks, which generally operate in the "forced" or non-autonomous setting. The study of symmetries in non-autonomous dynamical systems has been previously explored in the control theory literature, and has proven highly valuable for the design of robust and high performing equivariant filters \citep{mahony2022}. In this setting, the dynamical system is defined as $\frac{dx}{dt} = f(x, u)$ for some driving force $u$,
and the equivariance property is then defined as $g \cdot f(x,u) = f(g \cdot x, g \cdot u)$. We see that our FERNNs indeed are equivariant non-autonomous dynamical systems by this definition. 

The difference of our current work with this prior control-observer work, is that \cite{mahony2022} treats time-parameterized symmetries as known biases inside a hand-written dynamical model and uses lifts/adjoint operators to keep the estimation error autonomous. The FERNN instead learns the dynamics, lifts the hidden state to a group-indexed field, and enforces equivariance by a simple group-convolution weight-sharing rule. This removes the need for handwritten dynamical models and allows learning flexible non-linear dynamics.

\newpage
\section{Extended Results}
\label{sec:extended_results}


\subsection{In-Distribution Next-Step Prediction (Table \ref{tab:mnist_mse} \& Figure \ref{fig:train_loss})}
In Figure \ref{fig:in_dist_comb}, we show the sequence predictions of the models presented in Figure \ref{fig:train_loss} \& Table \ref{tab:mnist_mse}, trained on Translating MNIST $V_2^T$ and Rotating MNIST $V_4^R$, and evaluated on the same flows. We see that when tested in-distribution, all models appear to perform well from visual inspection. 

\begin{figure}[h!]
  \centering
  \includegraphics[width=1.0\linewidth]{}
  \caption{In-distribution sequence predictions for the models from Table \ref{tab:mnist_mse} \& Figure \ref{fig:train_loss}, trained on Rotating MNIST $V^R_4$ (left) and  Translating MNIST $V_2^T$ (right), evaluated on the same flows.}
  \label{fig:in_dist_comb}
\end{figure}

\subsection{Length Generalization MSE vs. Forward Prediction Plot (Rotation)}
In Figure \ref{fig:rot_len_gen}, we show the length generalization plot, analogous plot to Figure \ref{fig:len_gen_samples} (right), but for Rotating MNIST. We see that the length generalization performance gap is not as significant on Rotating MNIST compared with Translating MNIST. We suspect that this is due to the accumulation of errors induced by repeated interpolation when performing rotation by small angles on a discrete grid. Despite this, we see that the FERNN-$V_4^R$ still achieves strong generalization up to 30-time steps forward, significantly outperforming the non-equivariant model.

\begin{figure}[h!]
  \vspace{2ex}                      
  \centering
  \includegraphics[width=0.7\linewidth]{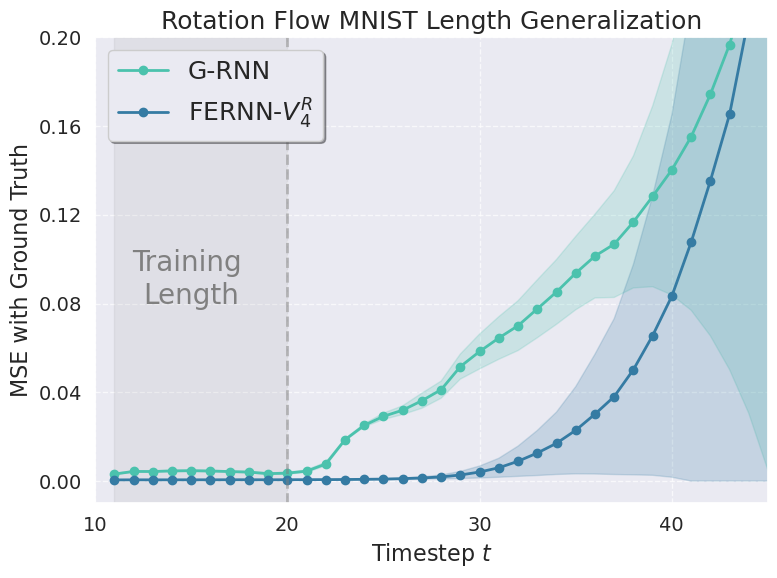}
  \caption{MSE vs. Forward prediction horizon for models on Rotating MNIST. Analogous plot to Figure \ref{fig:len_gen_samples} (right) but for Rotating MNIST.}
  \label{fig:rot_len_gen}
\end{figure}

\newpage

\subsection{Length Generalization Visualizations (Translation)}
In Figure \ref{fig:len_gen_extended} we show the sequence predictions of the same models presented in Table \ref{tab:mnist_mse} \& Figure \ref{fig:train_loss}, trained on Translating MNIST $V_2^T$ and Rotating MNIST $V_4^R$, but tested in the length generalization setting. We plot the 70-forward prediction steps here, subsampled by half in time (giving 35 elements). We plot the ground truth sequences on top, and the forward predictions below, with the error (in blue-red color scheme) on bottom. We see the FERNNs significantly outperform the G-RNNs in length generalization on Translating MNIST, mirroring the quantitative results in Figures \ref{fig:rot_len_gen} \& \ref{fig:len_gen_samples}.

\begin{figure}[h!]
  \centering
  \includegraphics[width=1.0\linewidth]{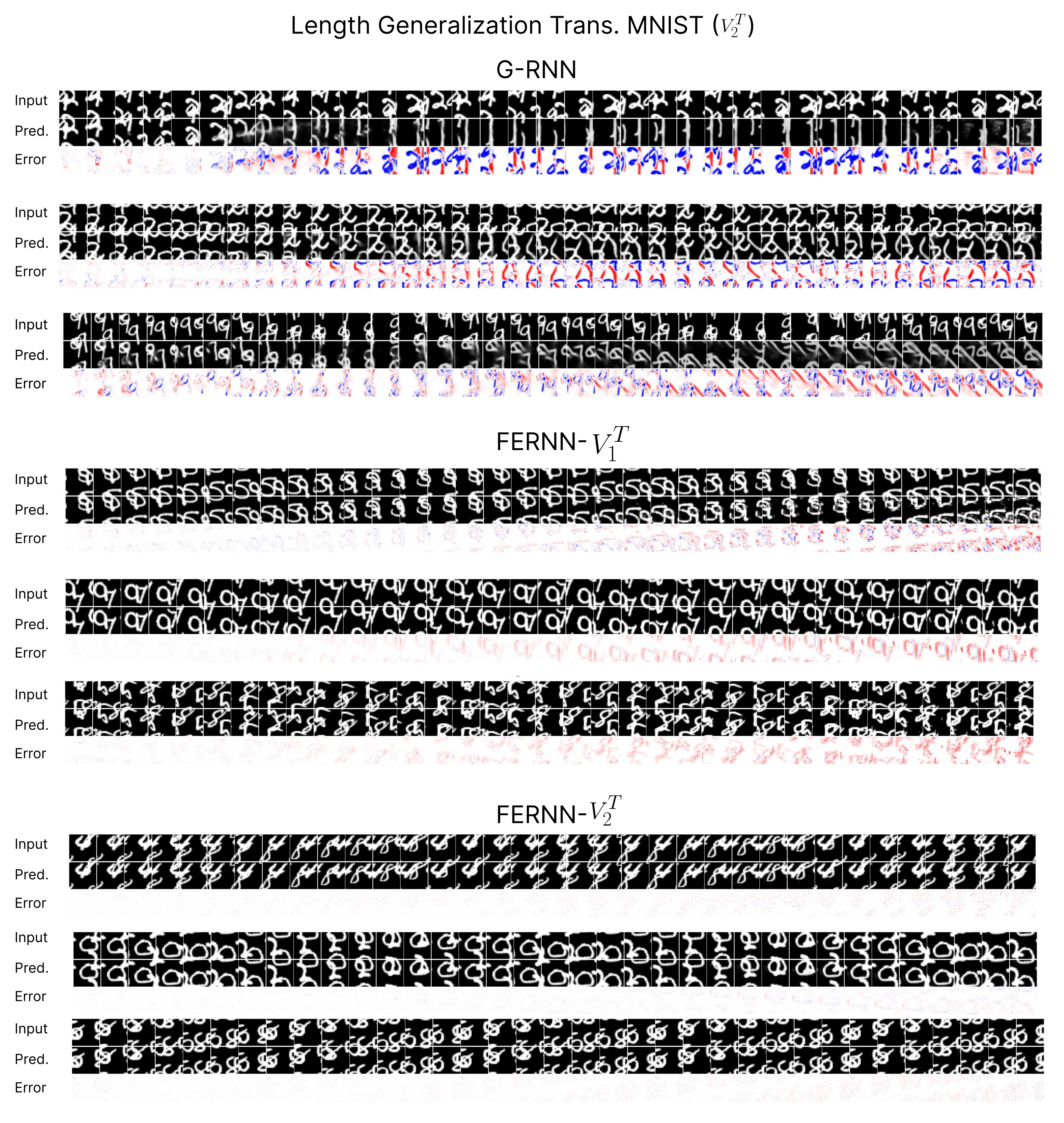}
  \vspace{5mm}
  \caption{Samples from models trained on Translating MNIST $V_2^T$ with training sequence lengths of 20, tested on sequence lengths of 80. We plot the 70-forward prediction steps here, subsampled by half in time (giving 35 elements).}
  \label{fig:len_gen_extended}
\end{figure}

\newpage

\subsection{Velocity Generalization Visualizations}
In Figure \ref{fig:vel_gen_extended} we show the sequence predictions of the models presented in Figure \ref{fig:v_generalization}, trained on Translating MNIST $V_1^T$ and Rotating MNIST $V_1^R$, but evaluated on on $V_2^T$ and $V_4^R$ respectively. These figures show that FERNNs which are equivariant to flows beyond their training distribution are able to automatically generalize to these transformations at test time, achieving near perfect next-step prediction performance where non-flow equivariant RNNs fail.

\begin{figure}[h!]
  \centering
  \includegraphics[width=1.0\linewidth]{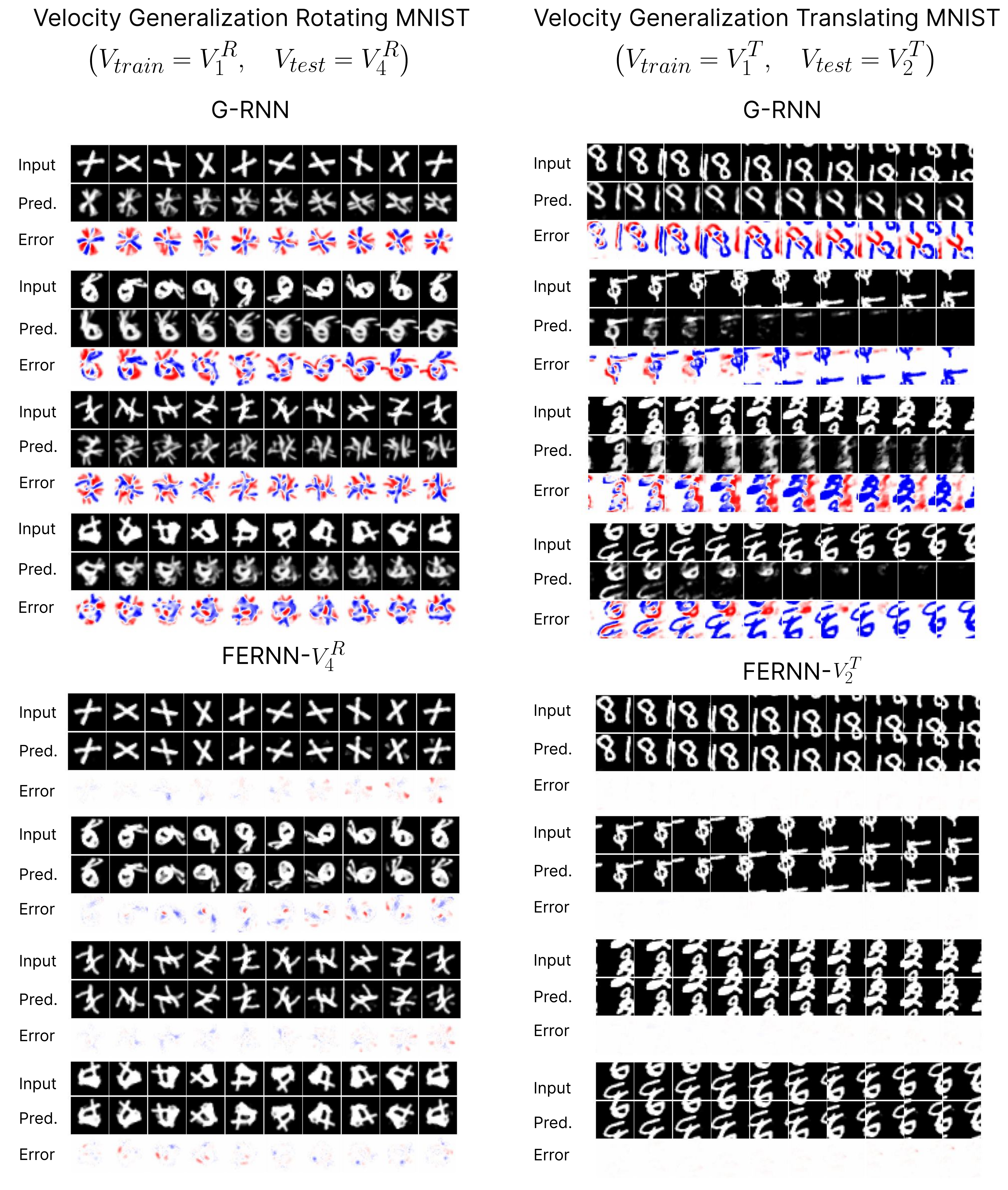}
  \vspace{5mm}
  \caption{Samples from models trained on Rotating MNIST $V_1^R$ (left) and Translating MNIST $V_1^T$ (right), and tested on sequences with significantly higher velocity flows ($V_4^R$ and $V_2^T$ respectively). We see that the FERNN (bottom rows) has no problem generalizing to these new velocities despite never having seen them in training, while the G-RNN (top) fails. The specific flows plotted in this example are $\pm 40 \frac{deg}{step}$ for Rotating MNIST (left), and  $\nu = (\pm2, \pm2)$ for Translating MNIST.}
  \label{fig:vel_gen_extended}
\end{figure}

\subsection{Flow Equivariant RNNs and Variable Velocity Flows}
\label{appendix:kth_variable_velocity}
By definition, flows formally maintain a constant velocity over time: they exponentiate a single constant generator element of the Lie algebra. This property is in fact necessary to get the flow composition property described as ($\psi_s(\nu) \cdot \psi_t(\nu) = \psi_{s+t}(\nu)$), and this flow composition property is essential to the equivariance proof of the FERNN (see \S \ref{appendix:fernn_eq_proof}).  

For real world sequences, this can be quite limiting, since most natural data contains flows of varying velocities throughout the sequence length. However, even though the theory only extends to constant velocity flows, \emph{in practice}, we observe that sequences with variable velocity flows are still better modeled by FERNNs than non-flow-equivariant counterparts. We provide empirical results of this on two additional non-constant-flow datasets below, and further provide intuition for why convolutional weights between lifted velocity dimensions (we call $V$-mixing) may allow for better modeling interactions between velocity channels, such as those resulting from elastic object collisions.

\subsubsection{Variable Velocity Moving KTH Action Recognition}
First, we test our model on the same moving KTH action recognition dataset as in the main text, but with a random velocity switch halfway through the sequence. Since it is entirely random, it is ultimately unpredictable to the model. Despite this, we find that the FERNN models still outperform the non-flow equivariant baselines by a significant margin, indicating the practical benefits of flow equivariance hold even when the theoretical assumptions may be violated.

\begin{table}[h!]
\centering
\vspace{2mm}
\begin{tabular}{lc}
\toprule
\textbf{Model}    & \textbf{Test Acc.} $\uparrow$  \\
\midrule
3D-CNN            & 0.59 $\pm$ 0.02      \\
G-RNN+            & 0.62 $\pm$ 0.04     \\
G‑RNN             & 0.65 $\pm$ 0.03      \\
FERNN‑$V^T_1$     & \textbf{0.69} $\pm$ \textbf{0.04}    \\
FERNN‑$V^{T}_2$   & \textbf{0.69} $\pm$ \textbf{0.02}    \\
\bottomrule
\end{tabular}
\vspace{4mm}
\caption{\textbf{FERNNs can model variable velocity flows better than baselines.}. Test accuracy (mean $\pm$ std) for models trained and tested on the Moving KTH dataset ($V_2^T$) with random velocity changes halfway through each sequence.}
\label{tab:kth_variable_velocity}
\vspace{-1mm}
\end{table}

\subsubsection{Mixing Across the $V$ Dimension: Bouncing MNIST}
\label{appendix:bouncing_mnist}
We additionally experiment with a `bouncing' variant of the Translating MNIST dataset. Specifically, we construct this dataset identically to the Translating MNIST dataset from the manuscript, but with the 2 digits now having elastic collisions with the image boundary and with each other. This causes digits to (predictably) change velocity at multiple points during the sequence as a function of their own velocity, and the velocity of the other objects that they collide with. Thus, to forward predict these sequences, models must be able to model interactions between velocities. In our framework, this is precisely the role of inter-$V$ terms of the flow convolution kernel of Equation \ref{eqn:flow_conv}. 

In order to have space for both digits and the boundary, we increase the canvas size to $28 \times 48$. We additionally draw tight bounding boxes around the digits (in white, value $1.0$) and draw a one-pixel image boundary on the canvas (in gray, value $0.5$). Such boundaries make it easier for the model to identify collisions. In Table \ref{tab:bouncing_mnist} below we report the results of the FERNN and the G-RNN baseline on this dataset, as well as the results of a FERNN with convolutional kernels that are non-zero for the cross-$V$ terms (denoted $V$-Mixing). In other words, the kernels are no longer defined as ($\gW^i_k\!\bigl(\nu, g\bigr) = \delta_{\nu=e}\gW^i_k\!\bigl(g\bigr)$), but instead defined as the fully parameterized kernel of Equation \ref{eqn:flow_conv}.

\begin{table}[h!]
\centering
\vspace{2mm}
\begin{tabular}{lc}
\toprule
\textbf{Model}    & \textbf{Bouncing MNIST Test MSE $\pm$ Std.} $\downarrow$  \\
\midrule
G‑RNN             & $2.5e{-2}\pm3e{-4}$     \\
FERNN‑$V^T_2$     &  $1.8e{-2}\pm3e{-4}$    \\
FERNN‑$V^{T}_2$ + $V$-Mixing   & $\mathbf{8.9e{-3}\pm9e{-4}}$    \\
\bottomrule
\end{tabular}
\vspace{4mm}
\caption{\textbf{FERNNs can model elastic collisions, and $V$-mixing improves modeling flow interactions}. Mean $\pm$ std over 3 random seeds on a variant of the translating MNIST dataset where digits bounce off each other and walls elastically. }
\label{tab:bouncing_mnist}
\vspace{-1mm}
\end{table}

Ultimately, we see that the FERNN model nearly halves the error compared with the G-RNN baseline, and the addition of velocity mixing parameters further improves the performance, despite having no guarantee of equivariance with respect to these highly complex flow transformations. Intuitively, we believe $V$-mixing is most beneficial when there are dynamical computations that require the \emph{interaction of multiple different velocity features simultaneously} to predict the next frame. In essence, one may think of the flow convolution as identifying collisions in space, and then transferring activation between opposing velocity channels to model the interaction. The constant velocity global flows in the Translating MNIST task of the main text require no such inter-velocity features, and therefore we found no benefit in practice to adding $V$-mixing to the main text results. 

\subsection{Moving KTH with Data Augmentation}
\label{appendix:kth_with_augmentation}
As noted in the main text, the flowing MNIST dataset can be seen as already `data augmented' since the velocities are randomly re-sampled for each training iteration. For the Moving KTH dataset however, this is not the case by default. In Table \ref{tab:kth_with_augmentation} below, we show the corresponding results for the Moving KTH dataset when re-sampling the velocities at each iteration, analogous to `data augmentation' by camera motion.  

\begin{table}[h!]
\centering
\vspace{2mm}
\begin{tabular}{lcc}
\toprule
\textbf{Model}    & \textbf{Test Acc.} $\uparrow$ & \textbf{w/ Data Augmentation}  \\
\midrule
3D-CNN            & 0.626 $\pm$ 0.02      & 0.742 $\pm$ 0.01            \\
G-RNN+            & 0.639 $\pm$ 0.02      & 0.662 $\pm$ 0.04          \\
G‑RNN             & 0.665 $\pm$ 0.03      & 0.684 $\pm$ 0.04             \\
FERNN‑$V^T_1$           & 0.698 $\pm$ 0.03      & 0.694 $\pm$ 0.05        \\
FERNN‑$V^{T}_2$           & \textbf{0.716} $\pm$ \textbf{0.04}    & \textbf{0.751} $\pm$ \textbf{0.01}         \\
\bottomrule
\end{tabular}
\vspace{4mm}
\caption{\textbf{FERNNs still outperform non-equivariant baselines on Moving KTH dataset with data augmentation}. Test accuracy (mean $\pm$ std) for models trained and tested on the Moving KTH dataset ($V_2^T$) with velocities re-sampled at each training iteration, analogous to data augmentation. We see that all models improve performance with data augmentation as expected, and the 3D-CNN improves the most, however the FERNN still performs the best and significatly better than the comparable G-RNN non-flow-equivariant baselines.}
\label{tab:kth_with_augmentation}
\vspace{-1mm}
\end{table}

\subsection{Larger Baselines}
In our exploration we have found that increasing the number of channels of the non-equivariant baselines has led to only marginal increases in performance at best (MNIST), and significantly worse performance in other cases (KTH). 

Concretely, on the KTH dataset, when multiplying the number of G-RNN channels by 25 (the number of flow generator elements, |$V$|, for FERNN-$V_2^T$), we found that we needed to reduce the learning rate from $3e{-4}$ to $1e{-4}$ to stabilize training, and even then, the model reached only $54.6 \pm 1\%$ test accuracy, under-performing the original G-RNN baseline value of $66.5\%$ test accuracy. We believe this drop in performance is due to the significantly increased number of trainable parameters resulting in increased training difficulty and more overfitting. On Translating MNIST, the amount of memory usage required to fit these additional parameters and activations prevents us from multiplying the number of channels by 25. When multiplying the number of channels by 8 (the maximum we could fit on our GPUs), we find that performance is only marginally improved from $8.1e{-3}$ (the original 128 channels) to $2.0e{-3}$ (1024 channels), however this is still significantly below the FERNN-$V_2^T$ performance of $1.5e{-4}$ (128 channels). 




\end{document}